\relax
\documentclass[letterpaper]{article} 
\usepackage{aaai19}  
\usepackage{times}  
\usepackage{helvet}  
\usepackage{courier}  
\usepackage{url}  
\usepackage{graphicx}  
\frenchspacing  
\setlength{\pdfpagewidth}{8.5in}  
\setlength{\pdfpageheight}{11in}  
\pdfinfo{
	/Title (2019 Formatting Instructions for Authors Using LaTeX)
	/Author (AAAI Press Staff)}
\setcounter{secnumdepth}{1} 
%


\usepackage[utf8]{inputenc} 
\usepackage[T1]{fontenc}    
\usepackage{hyperref}       
\usepackage{url}            
\usepackage{booktabs}       
\usepackage{amsfonts}       
\usepackage{nicefrac}       
\usepackage{microtype}      


%


\usepackage{amsmath}
\usepackage{amsthm}
\usepackage{amssymb}
\usepackage{framed}
\usepackage{algorithm}
\usepackage{algpseudocode}
\usepackage{breqn}
\usepackage{forloop}
\usepackage[capitalize]{cleveref}
\usepackage{graphicx}
\usepackage{subcaption}
\usepackage{xspace}
\usepackage{bm}
\usepackage{mathtools}
\usepackage[title]{appendix}
\usepackage{natbib}
\usepackage{threeparttable}
\DeclareMathOperator{\rank}{rank}

\crefname{appsec}{Appendix}{Appendices}

\usepackage{xargs}                      

\usepackage[pdftex,dvipsnames,table]{xcolor}  
\usepackage[colorinlistoftodos,prependcaption,textsize=normal]{todonotes}
\newcommandx{\unsure}[2][1=]{\todo[linecolor=red,backgroundcolor=red!25,bordercolor=red,#1]{#2}}
\newcommandx{\change}[2][1=]{\todo[linecolor=blue,backgroundcolor=blue!25,bordercolor=blue,#1]{#2}}
\newcommandx{\info}[2][1=]{\todo[linecolor=OliveGreen,backgroundcolor=OliveGreen!25,bordercolor=OliveGreen,#1]{#2}}
\newcommandx{\improvement}[2][1=]{\todo[linecolor=Plum,backgroundcolor=Plum!25,bordercolor=Plum,#1]{#2}}
\newcommandx{\thiswillnotshow}[2][1=]{\todo[disable,#1]{#2}}

\newcommand{\constraint}{\mathcal{K}}

\DeclareMathOperator*{\argmin}{arg\,min}
\DeclareMathOperator{\expect}{\mathbb{E}}

\newcommand{\AlgFKM}{\textsf{FKM}\xspace}
\newcommand{\AlgUn}{\textsf{Unregularized}\xspace}
\newcommand{\AlgOCG}{\textsf{StochOCG}\xspace}


\newcommand{\defcal}[1]{\expandafter\newcommand\csname 
C#1\endcsname{{\mathcal{#1}}}}
\newcommand{\defbb}[1]{\expandafter\newcommand\csname 
B#1\endcsname{{\mathbf{#1}}}}
\newcounter{calBbCounter}
\forLoop{1}{26}{calBbCounter}{
	\edef\letter{\Alph{calBbCounter}}
	\edef\sletter{\alph{calBbCounter}}
	\expandafter\defcal\letter
	\expandafter\defbb\letter
	\expandafter\defcal\sletter
	\expandafter\defbb\sletter
}

\newtheorem{theorem}{Theorem}
\newtheorem{lemma}{Lemma}

\algnewcommand{\commentcolor}[1]{\Comment{\textcolor{NavyBlue}{#1}}}

\nocopyright
\begin{document}
\title{Projection-Free Bandit Convex Optimization}
\author{
	Lin Chen$ ^{1,2} $\thanks{Equal contribution}\qquad Mingrui Zhang$ ^{3} 
	$\footnotemark[1]\qquad 
	Amin  Karbasi$ ^{1,2} $ \\
	$ ^1 $Yale Institute for Network Science, $ ^2 $Department of Electrical 
	Engineering,\\
	$ ^3 $Department of Statistics and Data Science,
	Yale University\\
	\texttt{\{lin.chen, mingrui.zhang, amin.karbasi\}@yale.edu} \\
}
\maketitle

\begin{abstract}
    In this paper, we propose the first computationally efficient 
    projection-free algorithm for  
    bandit convex optimization (BCO). We show that our algorithm achieves a 
    sublinear regret  of $ O(nT^{4/5}) $ (where $ T $ is the horizon and $ n $ 
    is the dimension) for any 
    bounded  convex functions with uniformly bounded gradients. We also evaluate the 
    performance of our algorithm against baselines on both synthetic and real 
    data sets for quadratic programming, portfolio 
    selection and matrix completion problems. 
\end{abstract}
\section{Introduction}
The online  learning setting models a dynamic optimization process in which data becomes 
available in a sequential manner and the learning algorithm has to adjust and 
update its 
predictor as more data is disclosed. 
It can be best formulated as a repeated two-player  game between a learner and an adversary as 
follows.  At 
each 
iteration $t$, the learner commits to a decision $ \Bx_t $ from a constraint set $ \constraint 
\subseteq
\mathbb{R}^n$. Then, the adversary  selects a cost function $f_t$ and the learner suffers the loss $ 
f_t(\Bx_t) $ in addition to receiving feedback. In the online learning model, it is generally assumed that the learner 
has access to a gradient oracle for all loss 
functions $f_t$, and thus knows  the loss had she chosen a different point  at 
iteration $t$. The performance of an 
online learning algorithm is measured by a game theoretic metric known as 
regret which is defined as the gap between the total 
loss that the learner has incurred after $T$ iterations and that of the best 
fixed decision in hindsight.

 In online learning, we are usually interested in sublinear regret as a function of the horizon $T$. 
 To this end, other 
structural assumptions are made. 
For instance, when all the loss functions $f_t$, as well as  the constraint set 
$\constraint$, are convex, the problem is 
known as 
 Online Convex Optimization (OCO)~\citep{zinkevich2003online}. This framework 
 has received a lot of 
 attention due to its capability to 
 model diverse problems in machine learning and statistics such as spam filtering, ad selection for search engines, 
 and recommender systems, to name a few. It is known that the online projected gradient descent 
 algorithm achieves a tight $O(\sqrt{T})$ regret 
 bound~\citep{zinkevich2003online}. However, in 
 many modern machine learning scenarios, one of the main computational bottlenecks is the 
 projection onto the constraint  set $\constraint$.  For example, in 
 recommender systems and matrix 
 completion, projections amount to expensive linear algebraic operations. 
 Similarly, projections onto matroid 
 polytopes with exponentially many linear inequalities are daunting tasks in general. This difficulty  has 
 motivated the use of projection-free algorithms 
 ~\citep{hazan2012projection,hazan2016introduction, Chen2018Projection} for which the most efficient 
 one achieves $O(T^{3/4})$ regret.

 In this paper, we consider a more difficult, and very often more realistic, OCO setting where the feedback is 
 incomplete.  More precisely, we consider a bandit feedback model where the only information observed by 
 the learner at iteration $t$ is the loss $f_t(\Bx_t)$  at the point $\Bx_t$  that she has chosen. In particular, the 
 learner does not know the loss had she chosen a different point $\Bx_t$. Therefore, the learner 
 has to 
 balance between exploiting the information that she has gathered and  
 exploring the new data. This 
 exploration-exploitation balance has been done beautifully by 
 \citep{Flaxman2005Online} to achieve 
 $O(T^{3/4})$ regret. With extra assumption on the loss functions (e.g., strong convexity), the 
 regret bound 
 has been recently improved to $ \tilde{O}(T^{1/2}) 
 $ ~\citep{Hazan2016optimal,Bubeck2015Bandit,Bubeck2017Kernel}. Again, all 
 these works either 
 rely on the 
 computationally expensive projection operations or inverting the Hessian 
 matrix of a 
 self-concordant barrier.  In 
 addition,   regret bounds usually have a very high polynomial dependency on 
 the dimension. 
 
 In this paper, we develop the first computationally efficient projection-free 
 algorithm with a sublinear 
 regret bound of $O(T^{4/5})$ 
 on the expected regret. We also show that the dependency on the dimension is linear. 
The regret bounds in different OCO settings  are summarized 
in~\cref{tab:regret_bound}.

\begin{table}[bht]
	\begin{threeparttable}[t]
		\centering
		\begin{tabular}{lll}
			\toprule
			& \textbf{Online} & \textbf{Bandit}	\\  \midrule
			\textbf{Projection} & $ O(T^{1/2}) 
			$\tnote{$\dagger$} & $ O(T^{3/4}) 
			$\tnote{$\ddagger$}, 
			$ \tilde{O}(T^{1/2}) 
			$\tnote{$\sharp$}
			\\ 
			\textbf{Projection-free} & $ O(T^{3/4}) 
			$\tnote{$\flat$} & 
			\cellcolor{gray!25}$ O(T^{4/5}) $ 
			(\textbf{this 
				work})\\ 
			\bottomrule
		\end{tabular}
		\begin{tablenotes}
			\item[$\dagger$] \cite{zinkevich2003online}
			\item[$\ddagger$] \cite{Flaxman2005Online}
			\item[$\sharp$] 
			\cite{Hazan2016optimal,Bubeck2015Bandit,Bubeck2017Kernel}
			\item[$\flat$] 
			\cite{hazan2012projection}\cite[Alg.~24]{hazan2016introduction}
		\end{tablenotes}
		\caption{Regret bounds in various settings of adversarial  online convex
			optimization.\label{tab:regret_bound}}
	\end{threeparttable}
\end{table}


\subsection*{Our Contributions}

\textbf{Sublinear regret with computational efficiency.} While there is a line 
of recent work that 
attains the minimax 
bound~\citep{Hazan2016optimal,Bubeck2015Bandit,Bubeck2017Kernel}, these 
algorithms have computationally expensive 
parts, such as inverting the Hessian of the self-concordant barrier. In 
contrast to these works that seek the lowest regret bound, we try to find a 
computationally efficient solution that attains a sublinear regret bound. 
Therefore, we have to avoid computationally expensive techniques like 
projection, Dikin ellipsoid and self-concordant barrier. 
As is shown in the experiments, our algorithm is simple and effective as  it 
only requires solving a linear optimization problem, while preserving a 
sublinear regret bound.

\textbf{Techniques.} The Frank-Wolfe (FW) algorithm may perform 
arbitrarily poorly with stochastic 
gradients even in the offline setting~\citep{hassani2017gradient}. Since the 
one-point estimator of gradient has a large variance, a simple combination of 
online FW~\citep{hazan2012projection} and one-point 
estimator~\citep{Flaxman2005Online} may not work. This is in fact shown empirically in 
Fig~\ref{fig:quad} when the loss functions are quadratic.    In addition, the online FW algorithm 
of \cite{hazan2012projection} 
is infeasible in the bandit setting. Basically, in each iteration of the online FW, the linear 
objective is the average gradient of all previous functions at a new point 
$x_t$. 
Note that in the bandit setting, it is impossible to evaluate the gradient of 
$f_i$ at 
$x_t$ ($i < 
t$), even with 
one-point estimators of \cite{Flaxman2005Online}. 

Our work has two major differences with \citep{hazan2012projection}. First, to 
make it a bandit 
algorithm, our linear objective is the sum of previously estimated gradients ($ 
\sum_{\tau=1}^{t-1} \Bg_\tau $, where $ \Bg_\tau $ is the one-point estimator 
of $ 
\nabla f_\tau(\Bx_\tau) $), rather than $ \sum_{\tau=1}^{t-1} \nabla 
f_\tau(\Bx_{t-1}) $. Second, we add a regularizer 
to 
stabilize the prediction. 

\section{Preliminaries}
\subsection{Notation} We 
let $ S^n \triangleq \{ \Bx\in \mathbb{R}^n: \| \Bx \|=1 \} $ and $ B^n 
\triangleq \{ \Bx\in \mathbb{R}^n: \| \Bx \|\le 1 \} $ denote the  unit sphere 
and  the  
unit ball in the $ n $-dimensional Euclidean space, respectively. Let $ \Bv $ 
be a random vector. We write $ \Bv\sim S^n 
$ and $ \Bv\sim B^n $ to indicate that $ \Bv $ is uniformly distributed over $ S^n $ and $ 
B^n $, respectively.

For any point set $ \CD \subseteq \mathbb{R}^n $ and $ \alpha>0 $, we denote $ 
\{ \Bx\in 
\mathbb{R}^n: \frac{1}{\alpha}\Bx\in \CD \} $ by $ \alpha \CD 
$.                         
Let $ f:\CD\to \mathbb{R} $ be a real-valued function on domain $ \CD 
\subseteq\mathbb{R}^n $. Its 
sup norm is given by $ \|f\|_{\infty} \triangleq \sup_{\Bx\in \CD}|f(\Bx)| $. 
We say that the function $ f:\CD\to \mathbb{R} $ is \emph{$ \alpha $-strongly 
convex}~\cite[pp.~63--64]{nesterov2003introductory} if $ f $ 
is continuously 
differentiable, $ \CD $ is a convex set, and the following inequality holds for 
$ \forall \Bx,\By \in \CD 
$ \[ 
f(\By)\geq (\Bx)+\nabla f(\Bx)^\top (\By-\Bx)+\frac{1}{2}\alpha \| \By-\Bx \|^2.
 \]
 An equivalent definition of strong convexity  is $(\nabla f(\Bx)-\nabla f(\By))^\top (\Bx-\By)\ge \alpha \| \Bx-\By 
 \|^2,$ for all 
 $\Bx,\By\in \CD $. 
%
We say that $ f $ is  \emph{$ G $-Lipschitz} if $ \forall 
  \Bx,\By\in \CD 
  $, $ \| f(\Bx)-f(\By) \| \leq G\| \Bx-\By \| $. 
   In this paper, we assume that the loss functions are all convex and bounded, 
   meaning that there 
  is a finite $M$ such that $\|f\|_{\infty}\leq M$. We also assume 
  that they are differentiable with uniformly bounded gradients, \emph{i.e.}, 
  there 
  exists a finite $G$ such that $\|\nabla 
  f\|_{\infty}\leq G$.

\subsection{Bandit Convex Optimization}
Online convex optimization is performed in a sequence of consecutive rounds,
where at round $t$, a learner has to choose an action $\Bx_t$ from a convex decision set
$\constraint\subseteq \mathbb{R}^n$. Then, an adversary  chooses a loss function $f_t$ from a family $ \CF 
$ 
of bounded convex 
functions. Once the action and the loss function are determined, the learner 
suffers a loss $f_t(\Bx_{t})$. The 
aim is to minimize regret  which is the gap 
between the accumulated  loss and the minimum loss in hindsight. More formally, the 
regret of a learning algorithm $ \CA $ after $T$ rounds is given by
\[ 
\CR_{\CA, T} \triangleq \sup_{ \{ f_1,\dots,f_T \} \subseteq \CF  } \left\{ 
\sum_{t=1}^{T} f_t(\Bx_t) 
- 
\min_{\Bx\in \CD} \sum_{t=1}^{T} f_t(\Bx) \right\}. 
\]
In the full information setting, the learner receives the loss function $f_t$ as a feedback (usually by having 
access to the gradient of $f_t$ at any feasible decision domain). In the bandit 
setting, however, the 
feedback  is limited to the  loss  at the point that she has 
chosen, \emph{i.e.}, $f_t(\Bx_t)$. In 
this 
paper, we consider the bandit setting where the family $\CF$ consists of bounded convex functions 
with  uniformly 
bounded gradients.  Under these conditions, we propose a projection-free  algorithm 
$\CA$ that achieves an expected regret of $\expect[\CR_{\CA, T}]=O(T^{4/5})$.

\subsection{Smoothing}\label{sub:smoothing}

A key ingredient of our solution relies on constructing the smoothed version of 
loss functions. Formally, 
for a function $ f $, its $ \delta $-smoothed version is defined by 
\begin{equation*}
\label{eq:smoothed function}
\hat{f}_\delta(\Bx) = \expect_{\Bv \sim B^n}[f(\Bx + \delta \Bv)],
\end{equation*} 
where $\Bv$ is drawn uniformly at 
random from the $ n $-dimensional unit ball $B^n$.  Here, $ \delta $ controls the 
radius of the ball that the function $ f $ is averaged over. Since $ \hat{f}_\delta $ is a smoothed 
version of $f$, it  inherits analytical properties from $ f $.  \cref{lem:smoothing} 
formalizes this idea. 

\begin{lemma}[Lemma 2.6 in \citep{hazan2016introduction}] \label{lem:smoothing}
	Let $ f:\CD\subseteq \mathbb{R}^n\to \mathbb{R} $ be a convex, $ G 
	$-Lipschitz continuous function and let $ \CD_0\subseteq \CD $ be such that 
	$ \forall \Bx\in \CD_0, \Bv\in S^n $, $ \Bx+\delta \Bv\in \CD $. Let 
	$\hat{f}_\delta$ be the $\delta$-smoothed function defined above. Then  $\hat{f}_\delta$ 
	is also convex, and $ \| 
	\hat{f}_\delta-f\|_{\infty} \le \delta G $ on $ \CD_0 $. 
\end{lemma}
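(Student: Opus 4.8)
The plan is to verify the two claims separately, after first confirming that $\hat{f}_\delta$ is even well defined on $\CD_0$. Fix $\Bx\in\CD_0$. For a realization $\Bv\sim B^n$ write $\Bv=r\Bu$ with $r\in[0,1]$ and $\Bu\in S^n$; then $\Bx+\delta\Bv=(1-r)\Bx+r(\Bx+\delta\Bu)$ is a convex combination of $\Bx\in\CD$ and $\Bx+\delta\Bu\in\CD$ (the latter by the hypothesis on $\CD_0$, which covers all sphere directions), so $\Bx+\delta\Bv\in\CD$ because $\CD$ is convex. Hence $f(\Bx+\delta\Bv)$ is defined for every realization of $\Bv$, and since $f$ is bounded the expectation exists.

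For convexity, the key observation is that an expectation of convex functions is convex. For each fixed $\Bv$ the map $\Bx\mapsto f(\Bx+\delta\Bv)$ is convex, being the composition of the convex $f$ with the affine shift $\Bx\mapsto\Bx+\delta\Bv$. Given $\Bx,\By\in\CD_0$ and $\lambda\in[0,1]$, I would rewrite $\lambda\Bx+(1-\lambda)\By+\delta\Bv=\lambda(\Bx+\delta\Bv)+(1-\lambda)(\By+\delta\Bv)$, apply convexity of $f$ pointwise in $\Bv$, and then take expectations; linearity and monotonicity of $\expect$ yield $\hat{f}_\delta(\lambda\Bx+(1-\lambda)\By)\le\lambda\hat{f}_\delta(\Bx)+(1-\lambda)\hat{f}_\delta(\By)$. (Here one should note that $\CD_0$ is convex—or at least restrict to pairs whose segment remains in $\CD_0$—so that the left-hand side is defined.)

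For the uniform bound, I would estimate pointwise and pull the absolute value inside the expectation: $|\hat{f}_\delta(\Bx)-f(\Bx)|=|\expect_{\Bv\sim B^n}[f(\Bx+\delta\Bv)-f(\Bx)]|\le\expect_{\Bv\sim B^n}|f(\Bx+\delta\Bv)-f(\Bx)|$. The $G$-Lipschitz property then gives $|f(\Bx+\delta\Bv)-f(\Bx)|\le G\|\delta\Bv\|=\delta G\|\Bv\|\le\delta G$, using $\|\Bv\|\le1$ on $B^n$. Taking expectations preserves this bound, so $\|\hat{f}_\delta-f\|_{\infty}\le\delta G$ on $\CD_0$.

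I expect no single step to be a serious obstacle; the only point requiring genuine care is the well-definedness argument, i.e.\ confirming that the averaging ball $\Bx+\delta B^n$ stays inside $\CD$ even though the hypothesis is stated only for the sphere $S^n$. This is precisely where convexity of $\CD$ is essential, and where a careless treatment would leave $f(\Bx+\delta\Bv)$ undefined for the interior directions $\|\Bv\|<1$.
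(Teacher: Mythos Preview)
The paper does not give its own proof of this lemma; it is quoted from Hazan's monograph and used as a black box. Your argument is correct and is the standard one: convexity is inherited under averaging, and the sup-norm bound follows directly from Lipschitzness and $\|\Bv\|\le 1$. One small remark: you justify integrability by saying ``since $f$ is bounded,'' but boundedness is not among the stated hypotheses; the cleaner justification is that $f$, being Lipschitz, is continuous on the compact set $\Bx+\delta B^n$ and hence bounded there (or, equivalently, your own Lipschitz estimate $|f(\Bx+\delta\Bv)-f(\Bx)|\le\delta G$ already shows the integrand is bounded).
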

Since $ \hat{f}_\delta $ is an approximation of $ f $, if one finds a minimizer 
of $ \hat{f}_\delta $,   \cref{lem:smoothing} implies that it also minimizes $ 
f $ approximately. Another advantage of considering the smoothed version is 
that it admits 
one-point gradient estimates of $\hat{f}_\delta$ based on samples of $f$. This idea was first 
introduced 
in~\citep{Flaxman2005Online} for developing an online gradient descent 
algorithm without having 
access to gradients.

\begin{lemma}[Lemma 6.4 in 
\citep{hazan2016introduction}]\label{lem:smooth_estimate}
Let $\delta > 0$ be any fixed positive real number and $\hat{f}_\delta$ be the 
$\delta$-smoothed version of  function $ f $. The following equation holds
\begin{equation}
\nabla 
\hat{f}_\delta (\Bx) =  \expect_{\Bu \sim S^n}\left[\frac{n}{\delta}f(\Bx 
+ 
\delta \Bu)\Bu\right].\label{eq:point_estimate}
\end{equation}
\end{lemma}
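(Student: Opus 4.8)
The plan is to reduce the gradient of the ball-average $\hat{f}_\delta$ to a flux integral over the sphere by the divergence theorem, and then rescale. First I would write the smoothing as a normalized integral over a translated ball and change variables so that the integration domain no longer depends on $\Bx$:
\[
\hat{f}_\delta(\Bx) = \frac{1}{\delta^{n}\,\mathrm{vol}(B^n)}\int_{\Bx+\delta B^n} f(\By)\,d\By = \frac{1}{\delta^{n}\,\mathrm{vol}(B^n)}\int_{\delta B^n} f(\Bx+\Bz)\,d\Bz,
\]
where the second equality is the substitution $\By=\Bx+\Bz$ and $\mathrm{vol}(B^n)$ denotes the volume of the unit ball. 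Since $f$ is differentiable with $\|\nabla f\|_{\infty}\le G$, the integrand has a gradient bounded by $G$ on the compact ball $\delta B^n$, so differentiation under the integral sign is justified and gives $\nabla\hat{f}_\delta(\Bx) = \frac{1}{\delta^{n}\,\mathrm{vol}(B^n)}\int_{\delta B^n}\nabla f(\Bx+\Bz)\,d\Bz$.

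Next I would convert this volume integral of $\nabla f$ into a surface integral over the boundary sphere $\delta S^n$. Applying the divergence theorem componentwise to the vector field $\Bz\mapsto f(\Bx+\Bz)\,\Be_i$ on the ball $\delta B^n$, whose divergence in $\Bz$ is exactly the $i$-th partial derivative $\partial_i f(\Bx+\Bz)$, yields
\[
\int_{\delta B^n}\partial_i f(\Bx+\Bz)\,d\Bz = \int_{\delta S^n} f(\Bx+\Bz)\,n_i\,dS(\Bz),
\]
where $\Bn=\Bz/\delta$ is the outward unit normal on the sphere of radius $\delta$. Stacking the coordinates, $\nabla\hat{f}_\delta(\Bx) = \frac{1}{\delta^{n+1}\,\mathrm{vol}(B^n)}\int_{\delta S^n} f(\Bx+\Bz)\,\Bz\,dS(\Bz)$.

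Finally I would rescale the integration sphere to the unit sphere via $\Bz=\delta\Bu$ with $\Bu\in S^n$; this contributes a factor $\delta^{n-1}$ from the $(n-1)$-dimensional surface measure, together with $\Bz=\delta\Bu$ in the integrand, so after collecting the powers of $\delta$ I am left with $\nabla\hat{f}_\delta(\Bx) = \frac{1}{\delta\,\mathrm{vol}(B^n)}\int_{S^n} f(\Bx+\delta\Bu)\,\Bu\,dS(\Bu)$. Normalizing this surface integral into an expectation over $\Bu\sim S^n$ introduces the surface area $\mathrm{vol}(S^n)$ of the unit sphere, and the identity $\mathrm{vol}(S^n)=n\,\mathrm{vol}(B^n)$ collapses the leading constant to exactly $n/\delta$, producing
\[
\nabla\hat{f}_\delta(\Bx) = \frac{n}{\delta}\,\expect_{\Bu\sim S^n}\!\left[f(\Bx+\delta\Bu)\,\Bu\right],
\]
which is the claimed identity.

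The variable substitutions and the bookkeeping of the $\delta$ exponents are routine. The step that deserves the most care is the application of the divergence theorem — justifying the componentwise flux identity and correctly identifying the outward normal $\Bz/\delta$ — together with the normalization identity $\mathrm{vol}(S^n)=n\,\mathrm{vol}(B^n)$ relating the sphere's surface area to the ball's volume, since getting that constant right is precisely what produces the dimension factor $n$ in the one-point estimator.
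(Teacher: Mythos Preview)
Your proof is correct and is essentially the standard argument for this identity; the paper itself does not supply a proof but simply cites it as Lemma~6.4 in \cite{hazan2016introduction}, where the same divergence-theorem (Stokes) computation appears. Your bookkeeping of the $\delta$ powers, the outward normal $\Bz/\delta$, and the normalization identity relating the surface area of the unit sphere to the volume of the unit ball are all handled correctly, so there is nothing to add.
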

 \cref{lem:smooth_estimate} suggests that in order to sample the gradient of $ 
 \hat{f}_\delta $ at a point $ \Bx $, it suffices to evaluate $ f $ at a random 
 point $ \Bx+\delta \Bu $ around the point $ \Bx $. 
%




\section{Algorithms and Main Results}\label{sec:algorithm}

The first key idea of our proposed algorithm is to construct a 
follow-the-regularized-leader objective
\begin{equation}\label{eq:FTRL}
F_t(\Bx) = \eta \sum_{\tau=1}^{t-1} 
\nabla f_{\tau}(\Bx_{\tau})^\top \Bx+\| \Bx-\Bx_1 \|^2.
\end{equation}
Instead of minimizing $ F_t $ directly
 (as it is done in 
follow-the-regularized-leader 
algorithm), the learner first solves a linear program over the decision set 
$\constraint$ 
\begin{equation}\label{eq:lp}
\Bv_t=\min_{\Bx\in 
	\constraint} \{ \nabla F_t(\Bx_t)\cdot \Bx \},
\end{equation}
 and then updates its decision as follows
 \begin{equation}\label{eq:update}
 	\Bx_{t+1}\gets (1-\sigma_t)\Bx_t+\sigma_t \Bv_t.
 \end{equation}
Note that minimizing $ F_t $ requires solving a quadratic optimization problem, 
which is as computationally prohibitive as a projection operation. In contrast, since the update 
in~\cref{eq:update} is a convex  combination between $\Bv_t$ 
and 
$\Bx_{t}$, the iterates always lie inside the convex decision set $\constraint$, thus no projection is 
needed. This is the main idea 
behind the  online conditional gradient algorithm (Algorithm~24 
in~\citep{hazan2016introduction}).  In the bandit setting (the focus of this 
paper), the gradients $ 
\nabla 
f_{\tau}(\Bx_{\tau}) $ are unavailable, hence the learner cannot perform steps \eqref{eq:FTRL} and 
\eqref{eq:lp}.  To tackle this 
issue, we introduce the second ingredient of our algorithm, namely, the smoothing and 
one-point gradient estimates~\citep{Flaxman2005Online}. Formally, at the $ t 
$-th iteration, 
rather than selecting $ \Bx_t $, the learner  plays a random point $ \By_t $ that is $ \delta 
$-close to 
$ \Bx_t $ and in return observes the cost $ f_t(\By_t) $.  As shown in Lemma 
\ref{lem:smooth_estimate}, $ f_t(\By_t) $ can be used to construct an unbiased 
estimate $ \Bg_t $ for 
the gradient of 
the $\delta$-smoothed version of $ f_t $ at point $ \Bx_t $, \emph{i.e.}, 
$\expect [\Bg_t] = \nabla \hat{f}_{t,\delta}(\Bx_t)$, where 
$\hat{f}_{t,\delta}(\Bx_t) \triangleq \expect_{\Bv \sim B^n}[f_t(\Bx_t + \delta 
\Bv)]$. This observation suggests 
that we can replace 
$ \nabla f_t(\Bx_t) $ by  $ \Bg_t $ in the 
follow-the-regularized-leader objective \eqref{eq:FTRL} to obtain a variant that relies on the 
one-point gradient estimate, \emph{i.e.}, 
\begin{equation}\label{eq:FTRLB}
F_t(\Bx) = \eta \sum_{\tau=1}^{t-1}\Bg_{\tau}^\top 
\Bx + \|\Bx - \Bx_1\|^2.
\end{equation}
Note that forming $F_t(\Bx)$ in \eqref{eq:FTRLB} is fully realizable for a learner in a bandit 
setting. The full description of our algorithm is outlined  in  \cref{alg:conditional_bandit}. Even 
though the  objective function $F_t(\Bx)$  relies on the unbiased estimates of the smoothed 
versions of $f_t$ (rather than $f_t$ itself), it is not far  off from the original  objective 
(shown in \cref{eq:FTRL}) if the distance between the random 
point $ \By_t $ and the point $ \Bx_t $ is properly chosen. Therefore, minimizing the sum of 
smoothed versions of $f_t$ (as it is done by \cref{alg:conditional_bandit}) will end up 
minimizing the actual regret. This intuition is formally proven in 
\cref{thm:regret_bound}. 
Without loss of generality, we assume additionally that the 
constraint $ \constraint $ contains a ball of radius $ r $ centered at the 
origin (this is always achievable by shrinking the constraint set as long as it 
has a non-empty interior).

\begin{algorithm}[htb]
	\begin{algorithmic}[1]
		\Require horizon $ T $, constraint set $ \constraint $
		\Ensure $ \By_1, \By_2, \dots, \By_T $
		\State $ \Bx_1 \in (1-\alpha) \constraint $
		\For{$ t=1,\ldots,T $}
		\State $ \By_t\gets \Bx_t+\delta \Bu_t $, where $ \Bu_t \sim S^n $ 
		\State Play $ \By_t $ and observe $ f_t(\By_t) $ 	
		\State $ \Bg_t\gets \frac{n}{\delta} f_t(\By_t)  \Bu_t$  
		\commentcolor{$ 
			\Bg_t $ is an unbiased estimator of $ \nabla 
			\hat{f}_{t,\delta}(\Bx_t) 
			$}
		\State $ F_{t}(\Bx) \gets \eta \sum_{\tau=1}^{t-1}\Bg_{\tau}^\top 
		\Bx + \|\Bx - \Bx_1\|^2 $ \label{ln:F_t}
		\State $ \Bv_{t}\gets \argmin_{\Bx\in (1-\alpha) \constraint} \{ \nabla 
		F_t(\Bx_t) \cdot \Bx\} $\commentcolor{Solve a linear optimization 
		problem}	\label{line:linear_optimization}	
		\State $ \Bx_{t+1}\gets (1-\sigma_t) \Bx_t +\sigma_t \Bv_t $
		\EndFor
	\end{algorithmic}
	\caption{Projection-Free Bandit Convex 
	Optimization}\label{alg:conditional_bandit}
\end{algorithm}

\begin{theorem}[\textbf{Proof in \cref{sec:proof}}]\label{thm:regret_bound}
	Assume that for every $ t\in \mathbb{N}_{\geq 1} $, $ f_t $ is convex, $   
	\|f_t\|_{\infty} \leq M $ 
	on $ \constraint $, $ 	
	\sup_{\Bx\in \constraint} \|\nabla 
	f_t(\Bx)\|\leq G $, $ r B^n\subseteq \constraint \subseteq RB^n  $, and that
	the 
	diameter of $ \constraint $ is $ D<\infty $. If we  set $ \eta = 
	\frac{D}{\sqrt{2}nM}T^{-4/5} $, $ \sigma_t = t^{-2/5} $,  $ \delta = c
	T^{-1/5} $, and $ \alpha=\delta/r < 1$ in \cref{alg:conditional_bandit}, 
	where $c>0$ is a  constant, we have $ \By_t \in \constraint,  \forall 1\le t\le T$. Moreover, 
	the expected regret $ \expect[\CR_{\CA, T}] $ up to horizon $ T $  is at most
	\[ 
\frac{\sqrt{2}nMD}{c^2}T^{3/5} + 
(\sqrt{2}nMD+ \frac{5\sqrt{2}}{4}DG+3cG+cRG/r)T^{4/5}.\\
	 \]
\end{theorem}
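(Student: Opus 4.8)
The plan is to prove the two assertions separately: feasibility of the played points, and the expected regret bound. The regret bound will be obtained by reducing the regret on the adversarial functions $f_t$ (evaluated at the played points $\By_t$) to a regret on their smoothed surrogates $\hat{f}_{t,\delta}$ (evaluated at the centers $\Bx_t$), and then controlling the latter by an online conditional gradient argument driven by the unbiased estimates $\Bg_t$. For feasibility I would argue by induction: since $\Bx_1 \in (1-\alpha)\constraint$ and each update $\Bx_{t+1} = (1-\sigma_t)\Bx_t + \sigma_t \Bv_t$ is a convex combination of $\Bx_t$ and $\Bv_t \in (1-\alpha)\constraint$, convexity gives $\Bx_t \in (1-\alpha)\constraint$ for all $t$. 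Writing $\Bx_t = (1-\alpha)\Bz$ with $\Bz \in \constraint$, and using $\alpha = \delta/r$ together with $r B^n \subseteq \constraint$, the perturbation satisfies $\delta \Bu_t = \alpha (r \Bu_t)$ with $r\Bu_t \in \constraint$, so $\By_t = \Bx_t + \delta \Bu_t = (1-\alpha)\Bz + \alpha(r\Bu_t)$ is a convex combination of two points of $\constraint$, hence $\By_t \in \constraint$.

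Fixing the offline optimum $\Bx^\star \in \argmin_{\Bx\in\constraint}\sum_t f_t(\Bx)$ and its shrunk version $\Bx^\star_\alpha = (1-\alpha)\Bx^\star \in (1-\alpha)\constraint$, I would telescope
\[
\sum_{t=1}^{T} f_t(\By_t) - \sum_{t=1}^{T} f_t(\Bx^\star) = \mathrm{(I)} + \mathrm{(II)} + \mathrm{(III)} + \mathrm{(IV)} + \mathrm{(V)},
\]
with $\mathrm{(I)}=\sum_t[f_t(\By_t)-f_t(\Bx_t)]$, $\mathrm{(II)}=\sum_t[f_t(\Bx_t)-\hat{f}_{t,\delta}(\Bx_t)]$, $\mathrm{(III)}=\sum_t[\hat{f}_{t,\delta}(\Bx_t)-\hat{f}_{t,\delta}(\Bx^\star_\alpha)]$, $\mathrm{(IV)}=\sum_t[\hat{f}_{t,\delta}(\Bx^\star_\alpha)-f_t(\Bx^\star_\alpha)]$, and $\mathrm{(V)}=\sum_t[f_t(\Bx^\star_\alpha)-f_t(\Bx^\star)]$. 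The $G$-Lipschitz property bounds $\mathrm{(I)}$ by $TG\delta$ and $\mathrm{(V)}$ by $TG\alpha R = T\delta GR/r$, while \cref{lem:smoothing} bounds each of $\mathrm{(II)},\mathrm{(IV)}$ by $TG\delta$. The heart of the matter is $\mathrm{(III)}$: by convexity of $\hat{f}_{t,\delta}$ (again \cref{lem:smoothing}), $\hat{f}_{t,\delta}(\Bx_t)-\hat{f}_{t,\delta}(\Bx^\star_\alpha) \le \nabla\hat{f}_{t,\delta}(\Bx_t)^\top(\Bx_t-\Bx^\star_\alpha)$, and since $\Bx_t$ is measurable with respect to $\Bg_1,\dots,\Bg_{t-1}$ while $\expect[\Bg_t\mid\Bg_1,\dots,\Bg_{t-1}]=\nabla\hat{f}_{t,\delta}(\Bx_t)$ by \cref{lem:smooth_estimate}, taking expectations gives $\expect[\mathrm{(III)}] \le \expect[\sum_t \Bg_t^\top(\Bx_t-\Bx^\star_\alpha)]$.

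The remaining step is an online conditional gradient regret bound for the linear losses $\ell_t(\Bx)=\Bg_t^\top\Bx$ with the strongly convex regularizer $\|\Bx-\Bx_1\|^2$. Letting $\Bx^\star_t=\argmin_{\Bx\in(1-\alpha)\constraint}F_t(\Bx)$ and $h_t=F_t(\Bx_t)-F_t(\Bx^\star_t)$, I would (a) use the standard follow-the-regularized-leader bound to control $\sum_t\ell_t(\Bx^\star_t)-\ell_t(\Bx^\star_\alpha)$ in terms of $D^2/\eta$ and $\eta\sum_t\|\Bg_t\|^2$, and (b) charge the gap between the played iterate and the leader, $\sum_t\ell_t(\Bx_t-\Bx^\star_t)\le\sum_t\|\Bg_t\|\,\|\Bx_t-\Bx^\star_t\|$, to $h_t$ through the $2$-strong convexity of $F_t$, which gives $\|\Bx_t-\Bx^\star_t\|\le\sqrt{h_t}$. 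The key is a one-step recursion for $h_t$: since $F_t$ is $2$-smooth and $\Bv_t$ minimizes the linearization of $F_t$ at $\Bx_t$, the Frank--Wolfe step gives $F_t(\Bx_{t+1})-F_t(\Bx^\star_t)\le(1-\sigma_t)h_t+\sigma_t^2 D^2$, while the increment $\eta\Bg_t^\top\Bx$ added in passing from $F_t$ to $F_{t+1}$ contributes a drift of at most $2\eta R\|\Bg_t\|$, yielding $h_{t+1}\le(1-\sigma_t)h_t+\sigma_t^2 D^2+2\eta R\|\Bg_t\|$. The crucial quantitative input is that the one-point estimator has large magnitude, $\|\Bg_t\|=\frac{n}{\delta}|f_t(\By_t)|\le\frac{nM}{\delta}$, which is precisely what degrades the exponent from the full-information $T^{3/4}$ to $T^{4/5}$.

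Solving the recursion with $\sigma_t=t^{-2/5}$ bounds $\sqrt{h_t}$, and summing the follow-the-regularized-leader term, the Frank--Wolfe gap term $\sum_t\|\Bg_t\|\sqrt{h_t}$, and the smoothing/Lipschitz penalties of order $TG\delta$ and $T\delta GR/r$, I would substitute $\eta=\frac{D}{\sqrt{2}nM}T^{-4/5}$ and $\delta=cT^{-1/5}$ and collect terms to match the claimed bound: the $\frac{\sqrt{2}nMD}{c^2}T^{3/5}$ piece arises from the $D^2/\eta$ term combined with the $\delta^{-2}$ scaling of $\|\Bg_t\|^2$, and the $T^{4/5}$ pieces from $\eta\sum_t\|\Bg_t\|^2$, the Frank--Wolfe gap, and the Lipschitz terms. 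I expect the main obstacle to be the simultaneous control of the $h_t$ recursion and the Frank--Wolfe gap sum $\sum_t\|\Bg_t\|\sqrt{h_t}$: because the drift $2\eta R\|\Bg_t\|\sim \eta nMR/\delta$ does not vanish, one must choose $\sigma_t$ to balance the Frank--Wolfe contraction against this persistent drift, and then verify that the resulting bound on $\sum_t\sqrt{h_t}$, multiplied by the large factor $\|\Bg_t\|=O(n/\delta)$, still lands at $O(T^{4/5})$ under the prescribed schedule.
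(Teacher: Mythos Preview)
Your overall architecture (feasibility by induction, the five-term telescoping, convexity of $\hat f_{t,\delta}$, and the split into an FTRL leader term plus a Frank--Wolfe gap term governed by $h_t$) matches the paper. But the step you flag as ``the main obstacle'' is in fact fatal as written. You bound the gap term by
\[
\sum_t \Bg_t^\top(\Bx_t-\Bx_t^\star)\ \le\ \sum_t \|\Bg_t\|\,\|\Bx_t-\Bx_t^\star\|,
\]
and then hope that $\sum_t\sqrt{h_t}$ multiplied by $\|\Bg_t\|=O(nM/\delta)=O((nM/c)T^{1/5})$ is still $O(T^{4/5})$. It is not: even with the sharp estimate $\|\Bx_t-\Bx_t^\star\|\le\sqrt{2}D\,t^{-1/5}$ (so $\sum_t\sqrt{h_t}=O(T^{4/5})$), the extra $T^{1/5}$ factor from $\|\Bg_t\|$ gives a linear $O(T)$ contribution. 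The paper avoids this by \emph{not} applying Cauchy--Schwarz with $\|\Bg_t\|$ on the gap term. Since both $\Bx_t$ and $\Bx_t^\star$ are measurable with respect to $\CF_t=\sigma(\Bg_1,\dots,\Bg_{t-1})$, one may take conditional expectation first and replace $\Bg_t$ by $\nabla\hat f_{t,\delta}(\Bx_t)$, whose norm is at most $G$:
\[
\expect\bigl[\Bg_t^\top(\Bx_t-\Bx_t^\star)\bigr]=\expect\bigl[\nabla\hat f_{t,\delta}(\Bx_t)^\top(\Bx_t-\Bx_t^\star)\bigr]\le G\,\expect\|\Bx_t-\Bx_t^\star\|.
\]
This is exactly what produces the $\frac{5\sqrt{2}}{4}DG\,T^{4/5}$ term in the final bound. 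In short: pass to $\Bg_t$ only in the FTRL piece $\sum_t\Bg_t^\top(\Bx_t^\star-\Bz)$, where the large $\|\Bg_t\|^2$ is absorbed by the small $\eta$; keep the true (bounded) gradient $\nabla\hat f_{t,\delta}$ in the gap piece.

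A secondary issue is your $h_t$ recursion. Bounding the drift by $2\eta R\|\Bg_t\|$ (i.e.\ using the diameter for $\|\Bx_{t+1}-\Bx_{t+1}^\star\|$) is too loose to sustain the induction $h_t\le 2D^2\sigma_t$ for $t$ close to $T$: the constant drift $\sim T^{-3/5}$ eventually dominates $\sigma_t^2\sim t^{-4/5}$. The paper instead uses the self-referential drift $\eta\|\Bg_{t+1}\|\sqrt{h_{t+1}}$ (via $\|\Bx_{t+1}-\Bx_{t+1}^\star\|\le\sqrt{h_{t+1}}$), completes the square in $\sqrt{h_{t+1}}$, and closes the induction with two elementary inequalities on the schedule $\sigma_t=t^{-2/5}$. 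You will need this tighter recursion, or an equivalent device, to get $\|\Bx_t-\Bx_t^\star\|\le\sqrt{2}D\,t^{-1/5}$ uniformly in $t\le T$.
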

Note that the regret bound of \cref{alg:conditional_bandit} depends linearly on the dimension $n$.






A minor drawback of \cref{alg:conditional_bandit} is that it requires the knowledge 
of the horizon $ T $. This problem can be easily circumvented  via the doubling trick while 
preserving 
the regret bound of \cref{thm:regret_bound}. 
%
The doubling trick was first proposed in~\citep{auer1995gambling} and its key 
idea is to invoke the base algorithm repeatedly with a doubling horizon. \cref{alg:anytime} outlines 
an \textit{anytime} algorithm for BCO using the doubling trick. \cref{thm:anytime} shows that for 
any $ t\geq 1 $, the 
expected
regret of \cref{alg:anytime} by the end of the $ t $-th iteration is bounded by 
$ O(t^{4/5}) $.

\begin{algorithm}[htb]
	\begin{algorithmic}[1]
		\Require constraint set $ \constraint $
		\Ensure $ \By_1, \By_2, \dots $
		\For{$ m=0,1,2,\ldots $}
		\State Run \cref{alg:conditional_bandit} with horizon $ 2^m $ from the 
		$ 2^m $-th iteration (inclusive) to the $ (2^{m+1}-1) $-th iteration 
		(inclusive). 
		\State Let $ \By_{2^m},\dots, \By_{2^{m+1}-1} $ be the points that 
		\cref{alg:conditional_bandit} selects for the objectives $ 
		f_{2^m},\dots, f_{2^{m+1}-1} $.
		\EndFor
	\end{algorithmic}
	\caption{Anytime Projection-Free  
		Bandit Convex Optimization}\label{alg:anytime}
\end{algorithm}

\begin{theorem}[\textbf{Proof in \cref{app:anytime}}]\label{thm:anytime}
	If the regret bound of \cref{alg:conditional_bandit} for horizon $ T $ is $ 
	\beta T^{4/5} $, then for any $ t\geq 1 $, the expected regret of 
	\cref{alg:anytime} by the end of the $ t $-th iteration is at most $$ 
	\expect[\CR_{\CA, T}]=\frac{\beta}{1-2^{-4/5}} (t+1)^{4/5} = O(t^{4/5}) .$$
\end{theorem}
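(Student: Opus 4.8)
The plan is to exploit the additivity of regret across the doubling epochs of \cref{alg:anytime}. First I would fix an arbitrary horizon $t$ and record the epoch structure: epoch $m$ (for $m=0,1,2,\dots$) spans iterations $2^m$ through $2^{m+1}-1$ and therefore has length exactly $2^m$, so epochs $0$ through $M$ jointly cover iterations $1$ through $2^{M+1}-1$. I would take $M$ to be the smallest index with $2^{M+1}-1\ge t$, equivalently $M=\lceil \log_2(t+1)\rceil-1$; this choice guarantees both that epochs $0,\dots,M$ cover the first $t$ rounds and that $2^M\le t+1$, the two facts I will need at the end.

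The central reduction is to bound the global regret by a sum of per-epoch regrets. Writing $\Bx^\star\in\argmin_{\Bx\in\constraint}\sum_{s=1}^t f_s(\Bx)$ for the global comparator and letting $I_0,\dots,I_M$ be the (played) index sets of the epochs intersected with $\{1,\dots,t\}$, I would use
\[
\begin{aligned}
\sum_{s=1}^t f_s(\By_s)-\sum_{s=1}^t f_s(\Bx^\star)
&=\sum_{m=0}^M\Big(\sum_{s\in I_m} f_s(\By_s)-\sum_{s\in I_m} f_s(\Bx^\star)\Big)\\
&\le \sum_{m=0}^M\Big(\sum_{s\in I_m} f_s(\By_s)-\min_{\Bx\in\constraint}\sum_{s\in I_m} f_s(\Bx)\Big),
\end{aligned}
\]
where the inequality holds termwise because $\sum_{s\in I_m} f_s(\Bx^\star)\ge \min_{\Bx\in\constraint}\sum_{s\in I_m} f_s(\Bx)$. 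Each summand on the right is exactly the regret that the base \cref{alg:conditional_bandit} incurs on the rounds it plays during epoch $m$ against the functions the adversary presents there, so by \cref{thm:regret_bound} (applied with horizon $2^m$) its expectation is at most $\beta(2^m)^{4/5}$. Taking expectations and summing the geometric series with ratio $2^{4/5}$ then gives
\[
\expect\Big[\sum_{s=1}^t f_s(\By_s)-\sum_{s=1}^t f_s(\Bx^\star)\Big]\le \beta\sum_{m=0}^M 2^{4m/5}<\frac{\beta\,2^{4(M+1)/5}}{2^{4/5}-1}=\frac{\beta}{1-2^{-4/5}}\,(2^M)^{4/5},
\]
and using $2^M\le t+1$ yields $(2^M)^{4/5}\le (t+1)^{4/5}$ and hence the claimed bound $\tfrac{\beta}{1-2^{-4/5}}(t+1)^{4/5}=O(t^{4/5})$.

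The one point requiring genuine care is the final epoch $M$, which at time $t$ may be only partially elapsed: \cref{alg:conditional_bandit} was launched with horizon $2^M$ but only its first $t-2^M+1\le 2^M$ rounds have been played. Unlike the full-information doubling trick, the guarantee of \cref{thm:regret_bound} is stated for a \emph{complete} run of a fixed horizon, so I must argue that the expected regret over a prefix of an epoch never exceeds the full-horizon bound $\beta(2^M)^{4/5}$. The hard part, then, is establishing this monotonicity — most cleanly by re-reading the proof of \cref{thm:regret_bound} as a sum of nonnegative per-round contributions, so that the cumulative expected regret is nondecreasing in the number of rounds and any prefix is dominated by the full-horizon value. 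Once that is in hand, the geometric-series estimate and the bookkeeping around the choice of $M$ are entirely routine.
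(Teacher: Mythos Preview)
Your argument is correct and follows the same route as the paper: decompose regret over the doubling epochs, bound each epoch by $\beta(2^m)^{4/5}$, and sum the geometric series with $M=\lceil\log_2(t+1)\rceil-1$. The paper's own proof is in fact just the one-line computation; your explicit decomposition with the global comparator $\Bx^\star$ is the standard justification the paper leaves implicit.

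You are also being more careful than the paper about the partially elapsed final epoch, which the paper simply ignores. Your proposed fix (re-reading the proof of \cref{thm:regret_bound} so that the bound is a sum of nonnegative per-round contributions) would work, but note that the \emph{regret itself} need not be monotone---only its upper bound is---so the phrasing ``cumulative expected regret is nondecreasing'' is slightly off. A cleaner, black-box resolution is the standard padding trick: since the bound of \cref{thm:regret_bound} holds against \emph{any} adversary, imagine the adversary plays $f_s\equiv 0$ for the remaining rounds $t+1,\dots,2^{M+1}-1$ of epoch $M$; the full-epoch regret then equals the prefix regret, and is bounded by $\beta(2^M)^{4/5}$. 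This avoids re-opening the proof of \cref{thm:regret_bound} entirely.
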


\begin{figure*}[htb]
	\centering
	\begin{subfigure}[t]{0.238\textwidth}
		\includegraphics[width=\textwidth]{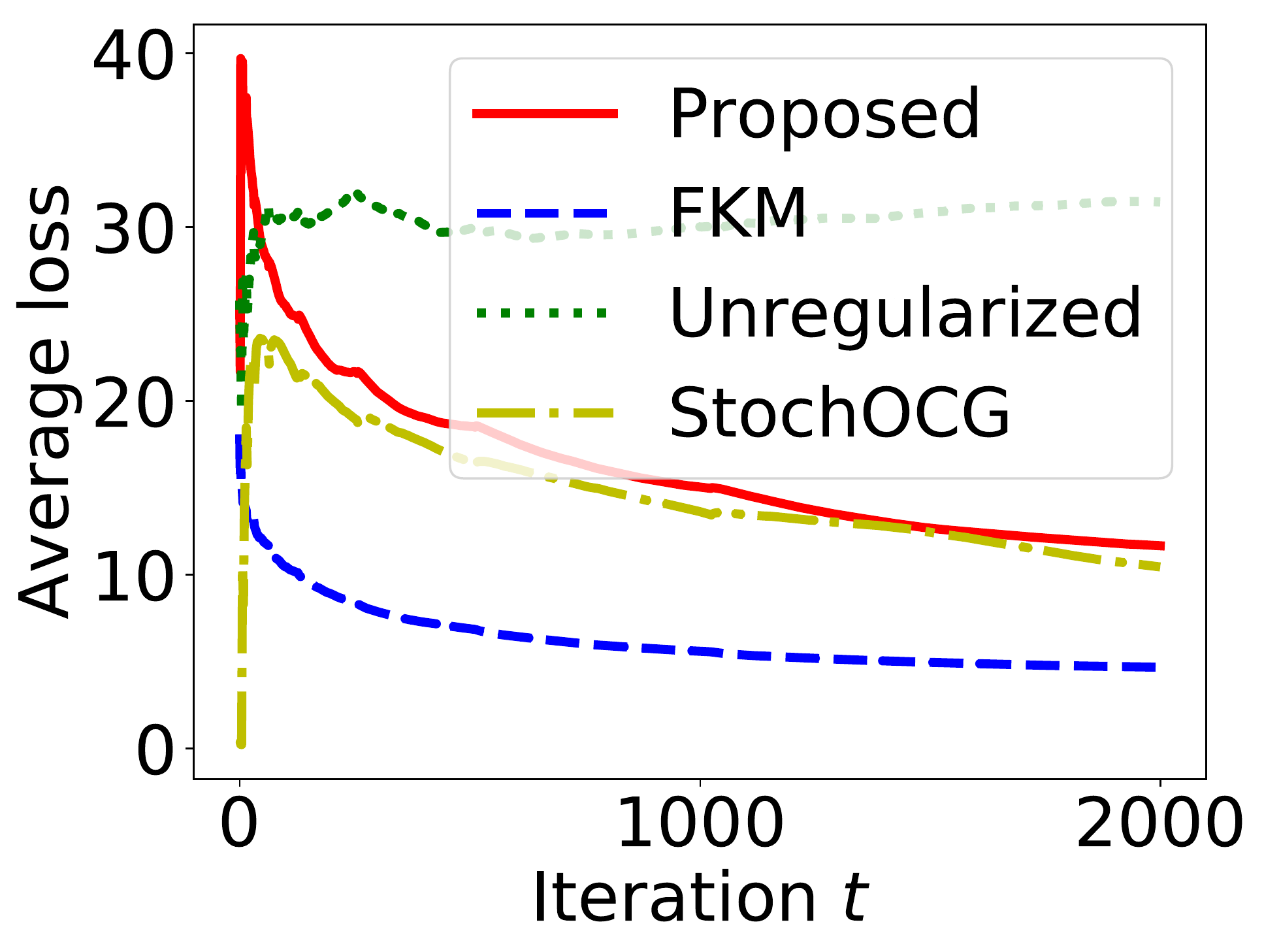}
		\caption{Quadratic programming}
		\label{fig:quad}
	\end{subfigure}
	\begin{subfigure}[t]{0.25\textwidth}
		\includegraphics[width=\textwidth]{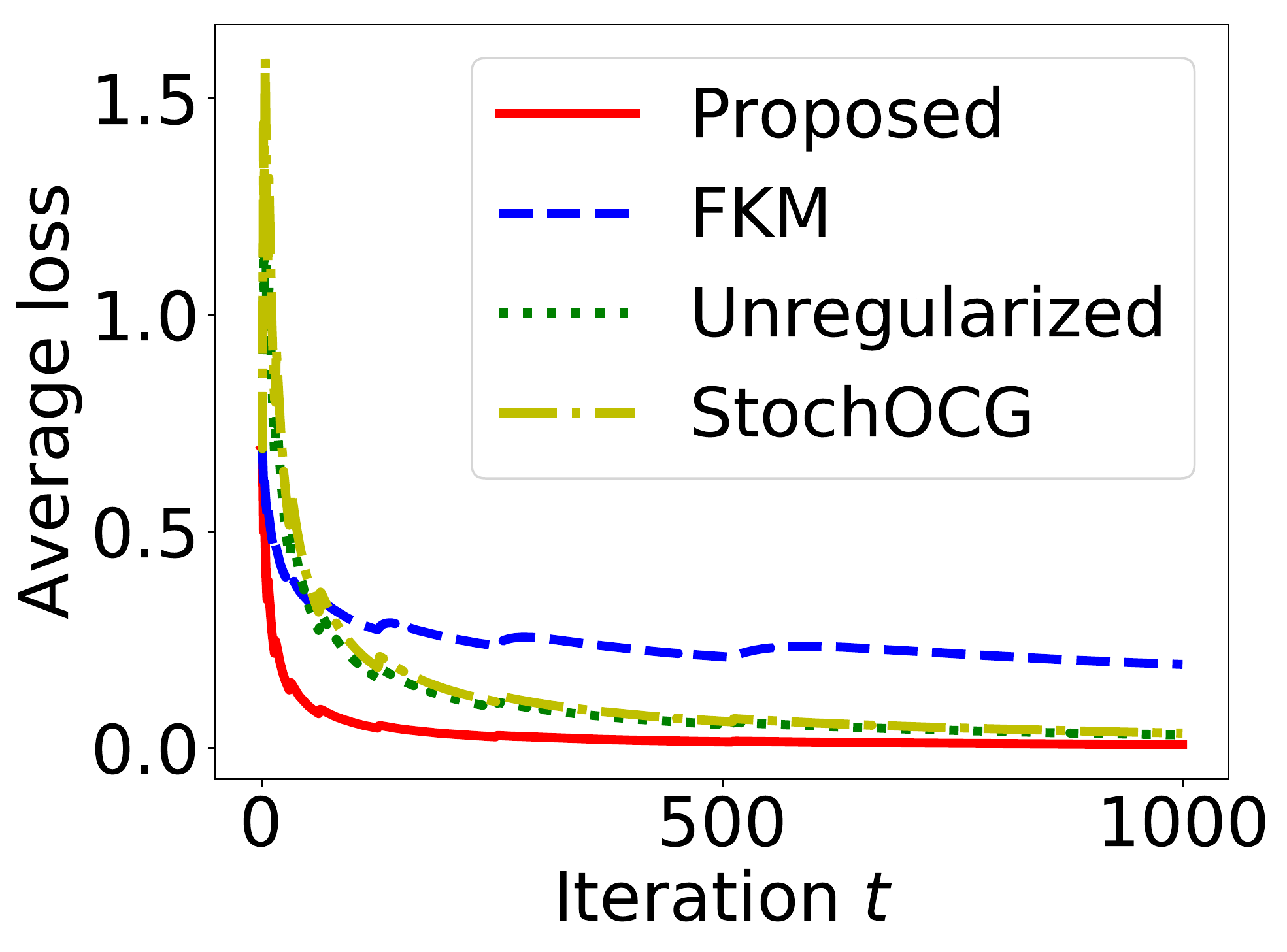}
		\caption{Portfolio selection}
		\label{fig:portfolio}
	\end{subfigure}
\begin{subfigure}[t]{0.252\textwidth}
	\includegraphics[width=\textwidth]{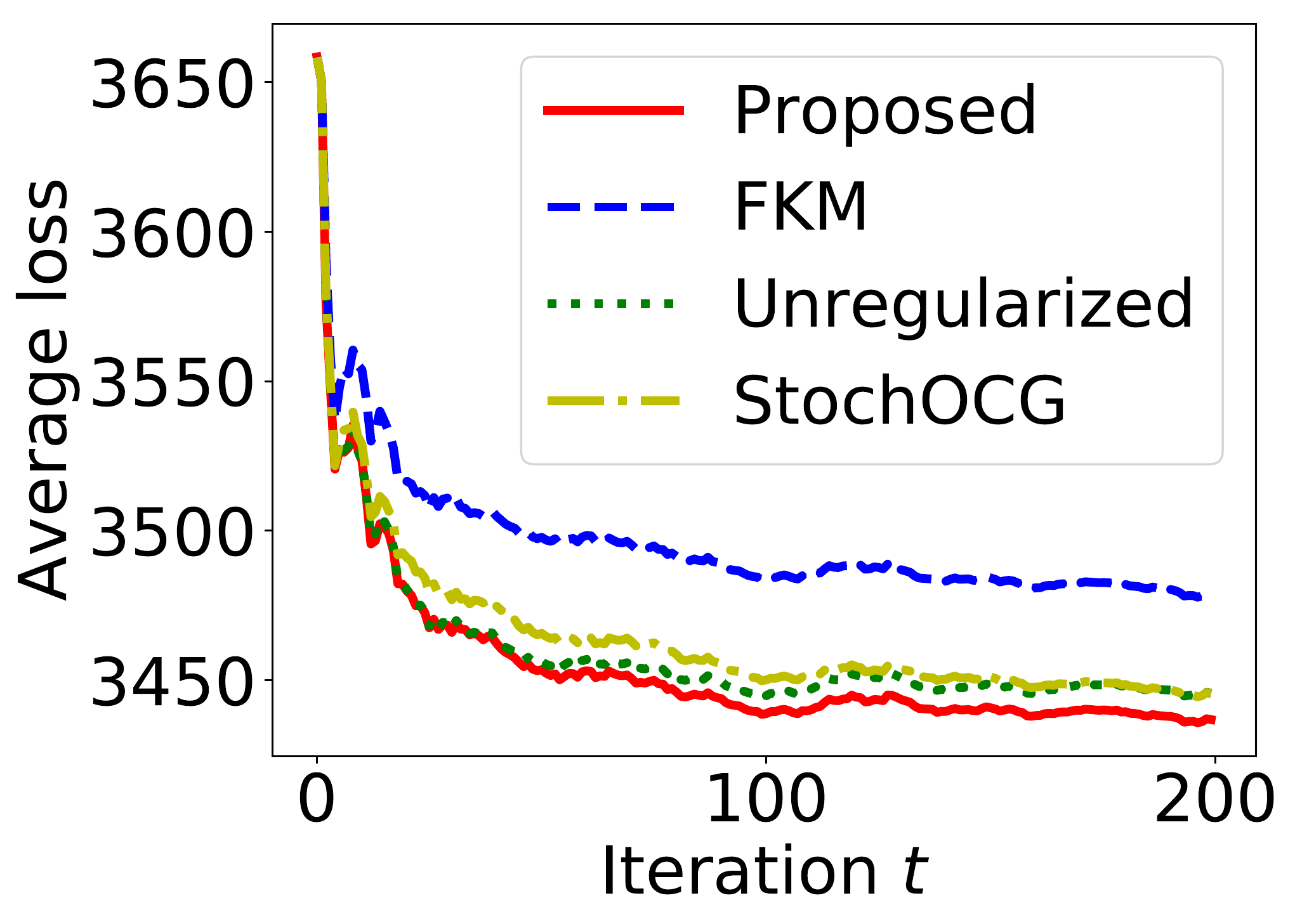}
	\caption{Matrix completion}
	\label{fig:completion}
\end{subfigure}
\begin{subfigure}[t]{0.246\textwidth}
	\includegraphics[width=\textwidth]{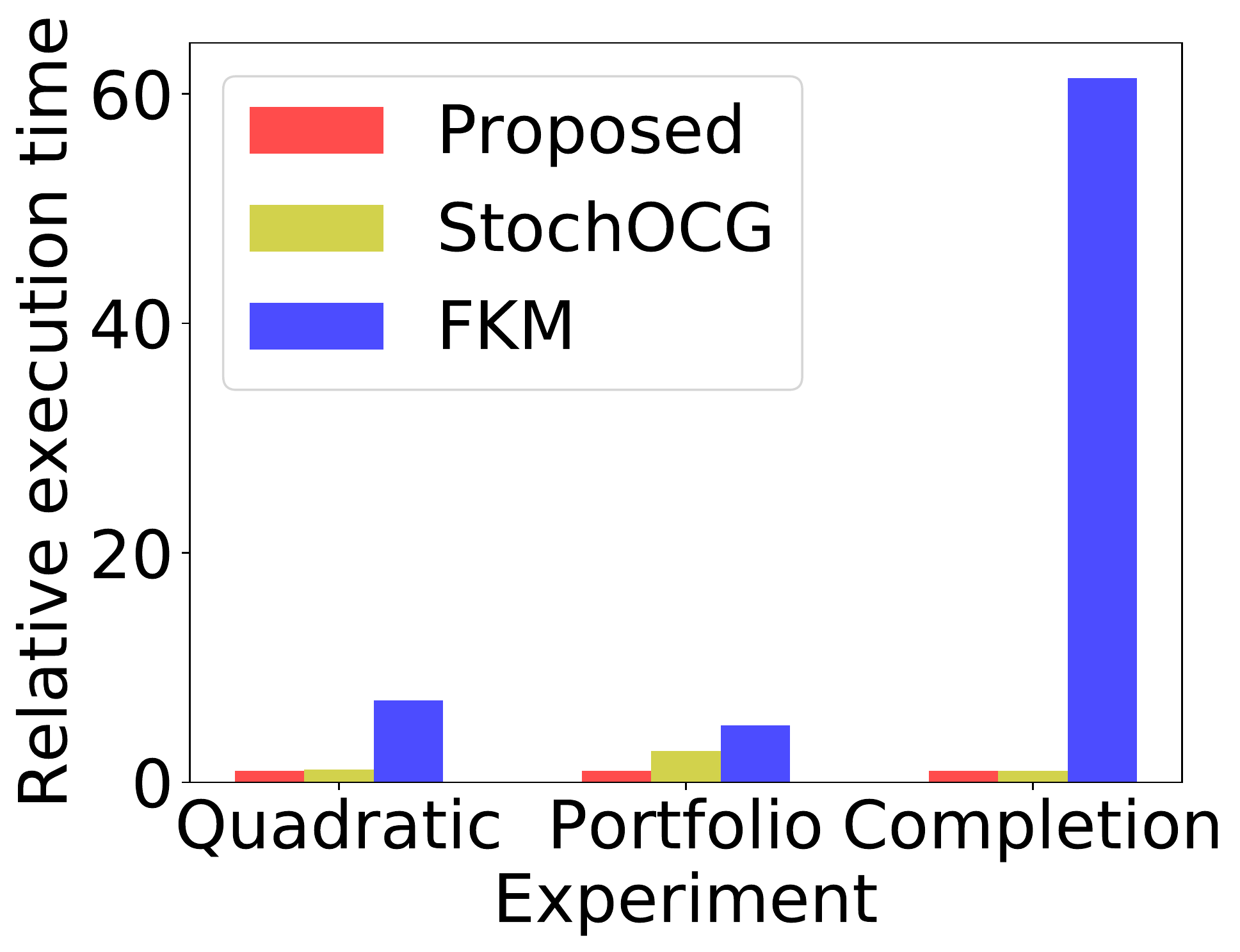}
	\caption{Relative execution time}
	\label{fig:runtime}
\end{subfigure}
	\caption{In~\cref{fig:quad,fig:portfolio,fig:completion}, we 
	show the 
	average loss versus the number of iterations in the three sets of 
	experiments. The relative execution time is shown in~\cref{fig:runtime}, 
	where the execution time of the proposed algorithm is set to $ 1 $.
}\label{fig:experiment_results}
\end{figure*}
\section{Experiments}
In our set of experiments, we compare \cref{alg:anytime} with the following baselines:
\begin{itemize}
	\item  \AlgFKM: Online projected gradient descent with  spherical gradient 
	estimators  
	\citep{Flaxman2005Online}. 
	\item \AlgUn: A variant of our proposed algorithm 
	without the regularizer $ \| \Bx-\Bx_1 \|^2 $ in line~\ref{ln:F_t} of 
	\cref{alg:conditional_bandit}.
	\item \AlgOCG:  Online conditional gradient \citep{hazan2016introduction} 
	with stochastic 
	gradients (not a bandit algorithm). Such stochastic gradients are formed  by 
	adding Gaussian noise with standard deviation $ n $ to the exact gradients. 
\end{itemize}
The anytime version of the algorithms (obtained via the doubling trick) is 
used. Therefore the horizon $ T $ is unknown to the algorithms.
Note that the standard deviation of 
the point estimate used in \AlgFKM and our proposed method  is proportional to 
the dimension $ n $. This is why the 
standard deviation of the Gaussian noise in \AlgOCG is set to $n$ to make the noise in the gradients 
comparable. 

We performed three sets of experiments in total. In all of them we report the 
average loss defined as 
$\expect[\sum_{t=1}^{T} f_t(\Bx_t)]/T$. 

\textbf{Quadratic programming:}
In the first 
experiment, the loss functions are quadratic, \emph{i.e.}, $ f_t(\Bx) = 
\frac{1}{2}\Bx^\top \BG_t^\top \BG_t \Bx + \Bw_t^\top \Bx $. Each entry of $ 
\BG_t $ 
and $ \Bw_t $ is sampled from the standard normal distribution. The convex 
constraint of this problem is a polytope $ \{ \Bx:\bm{0}\leq \Bx\leq \bm{1}, 
\BA \Bx \leq \bm{1} \} $ and each entry of $ \BA $ is sampled from the uniform 
distribution on $ [0,1] $. The average loss is illustrated 
in~\cref{fig:quad}. We observe that the average loss of our proposed 
algorithm declines as the number of iterations increases. This agrees with the theoretical 
sublinear regret bound. \AlgOCG  has a similar performance while \AlgFKM 
exhibits the lowest 
loss.  In contrast, the 
loss of \AlgUn 
appears to be linear which shows the significance of regularization to achieve low regret. This observation also 
suggests that simply combining 
\citep{hazan2012projection} and smoothing may not work in practice.


\textbf{Portfolio selection:} For this experiment, we 
randomly select $ n=100 $ stocks from Standard \& Poor's 500 index component 
stocks and consider their prices during the business days between February 
18th, 2013 and November 27th, 2017. We follow the formulation in~\cite[Section 
1.2]{hazan2016introduction}. Let $ \Br_t\in \mathbb{R}^n $ be a vector 
such that $ \Br_t(i)  $ is the ratio of the price of stock $ i $ on day $ t+1 $ 
to its price on day $ t $. An investor is trying maximize her wealth by investing on different stock options. 
If 
$W_t$ denotes her wealth on day $t$, then we have the following recursion: $W_{t+1} = W_t \cdot  
\Br_t^\top 
\Bx_t $. After $T$ days of investments, the total wealth will be $W_T = W_1\cdot \prod_{t=1}^{T} 
\Br_t^\top 
\Bx_t $. To maximize  the wealth, the investor has to maximize $ \sum_{t=1}^{T} \log(\Br_t^\top 
\Bx_t) $, or equivalently minimize its negation. Thus, we can define $ 
f_t(\Bx_t)\triangleq 
-\log(\Br_t^\top 
\Bx_t) $. \AlgFKM requires that  the constraint set contains the unit ball. To this end, we set $ 
\By_t = 
2n\Bx_t-1 $ 
so that $ 
\By_t $ lies in an enlarged region $ \Delta'_n \triangleq \{ \By\in 
\mathbb{R}^n: -1\leq \By(i) \leq 2n-1, \sum_{i=1}^{n} \By(i)\leq n \} $. In 
addition, the objective functions $ f_t $ are viewed as functions of $ \By_t $ 
rather 
than $ \Bx_t $. The average losses versus the number of 
iterations 
are presented in~\cref{fig:portfolio}. Our proposed algorithm has the lowest 
loss in this set of experiments while \AlgFKM has the largest.

\textbf{Matrix completion:} Let $ \{\BM_t\}_{t=1}^T $ be symmetric positive 
semi-definite (PSD)
matrices, where $ \BM_t = \BN_t^\top \BN_t$ and every entry of $ \BN_t\in 
\mathbb{R}^{k\times n} $ obeys the standard normal distribution. At each 
iteration, half of the entries of $ \BM_t $ are observed. We set $ n=20 $ and $ 
k=18 $. We denote 
the 
entries of $ \BM_t $ disclosed at the $ t $-th iteration by $ O_t $. We want to 
minimize $ f_t(\BX_t) \triangleq\frac{1}{2}\sum_{(i,j)\in 
O_t} (\BX_t[i,j]-\BM_t[i,j])^2 $ subject to $ \|\BX_t\|_*\le k $, where $ \BX_t 
$ 
is 
of the same shape as $ \BM_t $ and $ 
\|\cdot\|_* $ denotes the nuclear norm. The nuclear norm constraint is a 
standard convex relaxation of the rank constraint $ \rank(\BX)\le k $. 
The linear optimization step in \cref{line:linear_optimization} of 
\cref{alg:conditional_bandit} has a closed-form solution $ \Bv_t = k 
\Bv_{\max} \Bv_{\max}^\top$, where $ \Bv_{\max}  $ is the eigenvector of the
largest eigenvalue of  $ -\nabla f_t(\BX_t) $ 
\cite[Section~7.3.1]{hazan2016introduction}. The largest eigenvector can be 
computed very efficiently using power iterations, whilst it is extremely costly 
to perform projection onto a convex subset of the space of PSD matrices. 
As shown in 
\cref{fig:runtime}, the efficiency of the proposed algorithm is $ 61 $ times 
that of the projection-based \AlgFKM algorithm.
The 
average loss 
of the 
algorithms is 
shown in \cref{fig:completion}. Our proposed algorithm outperforms the other 
baselines  while  \AlgFKM suffers the largest loss. 


 
 We also observe rises of the curves at their initial stage in 
 \cref{fig:experiment_results}. They are due to the doubling trick 
 (\cref{alg:anytime}) and 
a 
small denominator of the average loss. The unknown horizon is divided into 
epochs with a doubling size (1, 2, 4, and so forth). When the algorithm starts 
a new epoch, everything is reset and the algorithm learns from scratch. 
Furthermore, the denominator of the average loss is small (it is initially 1, 
and 
then becomes 2, 3, 4, and so forth) at the initial stage. Therefore, due to the 
frequent resets and a small denominator, the behavior is less stable. As the 
epoch size and denominator grow, the average loss declines steadily.

The execution time is shown 
in~\cref{fig:runtime}. It was measured on eight Intel Xeon E5-2660 V2 cores and 
the algorithms were implemented in  Julia. 50 repeated experiments 
were run in parallel.
It can be 
observed that our proposed algorithm 
runs significantly faster than the \AlgFKM algorithm (mostly by avoiding the 
projection steps). 
Specifically, its efficiency 
is almost 7 times, 5 times, and 61 times that of the FKM 
algorithm in 
the 
three sets of experiments, respectively. \AlgOCG requires computation of 
gradients and is also slower than the proposed algorithm.
\section{Proof of \cref{thm:regret_bound}}\label{sec:proof}
First we show $ \By_t\in \constraint$. Since $\Bv_t \in (1-\alpha) 
\constraint$, $ \Bx_1 \in (1-\alpha) \constraint$ and $\Bx_{t+1} = 
(1-\sigma_{t})\Bx_{t}+\sigma_{t}\Bv_{t}$, by induction and the convexity of 
$\constraint$, we have $\Bx_{t} \in 
(1-\alpha) \constraint$ for every $ t $.  
Recall that $ 
\By_t = \Bx_t + \delta \Bu_t$, where $\Bu_t 
\in S^n$ and $\alpha = \delta/r$. Since $\constraint$ is convex and $rS^n 
\subseteq rB^n \subseteq \constraint$, we have $\By_t \in (1-\alpha)\constraint 
+ \alpha rS^n \subseteq (1-\alpha)\constraint + \alpha \constraint = 
\constraint.$

	Let $ \Bx_t^* \triangleq \argmin_{\Bx\in (1-\alpha)\constraint} F_t(\Bx) $ 
	and 
	$\hat{f}_{t,\delta}(\Bx_t) \triangleq \expect_{\Bv \sim B^n}[f_t(\Bx_t + 
	\delta 
	\Bv)]$. The first step is to 
	derive a bound on $ \sum_{t=1}^{T} \Bg_t^{\top} (\Bx_t^* - \Bz)$.
	We need the following 
	lemma. 
	\begin{lemma}[Lemma 2.3 in \citep{shalev2012online}]\label{lem:regret_bound}
		Let $ \Bw_1,\Bw_2,\ldots $ be a sequence of vectors in $ 
		(1-\alpha)\constraint $ 
		such 
		that $ \forall t, \Bw_t = \argmin_{\Bw\in (1-\alpha)\constraint} 
		\sum_{i=1}^{t-1} 
		f_i(\Bw)+R(\Bw) $.
		Then for every $ \Bz\in (1-\alpha)\constraint $, we have
		$
		\sum_{t=1}^{T}(f_t(\Bw_t)-f_t(\Bz)) \leq R(\Bz) - 
		R(\Bw_1)+\sum_{t=1}^{T}(f_t(\Bw_t)-f_t(\Bw_{t+1})).
		$
	\end{lemma}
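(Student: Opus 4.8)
The plan is to recognize this as the standard Follow-the-Regularized-Leader guarantee and to reduce it to the ``Be-the-Leader'' principle by absorbing the regularizer into the loss sequence. Concretely, I would introduce an auxiliary loss at time $0$ by setting $f_0 \triangleq R$, so that the defining property of the iterates becomes $\Bw_t = \argmin_{\Bw \in (1-\alpha)\constraint} \sum_{i=0}^{t-1} f_i(\Bw)$. In this form $\Bw_t$ is exactly the leader after having observed $f_0, f_1, \ldots, f_{t-1}$, which lets me invoke a purely combinatorial statement about leaders that uses nothing beyond the $\argmin$ definitions.

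The core step is the Be-the-Leader inequality: writing $\Bw_{t+1} = \argmin_{\Bw \in (1-\alpha)\constraint} \sum_{i=0}^{t} f_i(\Bw)$ for the leader through time $t$, I claim that for every $\Bz \in (1-\alpha)\constraint$,
\[
\sum_{t=0}^{T} f_t(\Bw_{t+1}) \leq \sum_{t=0}^{T} f_t(\Bz).
\]
I would prove this by induction on $T$. The base case $T=0$ is just the optimality of $\Bw_1$ as the minimizer of $f_0 = R$. For the inductive step, applying the hypothesis at horizon $T-1$ with the comparator $\Bz = \Bw_{T+1}$ gives $\sum_{t=0}^{T-1} f_t(\Bw_{t+1}) \leq \sum_{t=0}^{T-1} f_t(\Bw_{T+1})$; adding $f_T(\Bw_{T+1})$ to both sides and then using that $\Bw_{T+1}$ minimizes $\sum_{i=0}^{T} f_i$ over the set closes the induction.

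Finally, I would unwind the auxiliary notation. Expanding the Be-the-Leader inequality with $f_0 = R$ yields $R(\Bw_1) + \sum_{t=1}^{T} f_t(\Bw_{t+1}) \leq R(\Bz) + \sum_{t=1}^{T} f_t(\Bz)$, which rearranges to $\sum_{t=1}^{T}(f_t(\Bw_{t+1}) - f_t(\Bz)) \leq R(\Bz) - R(\Bw_1)$. Writing the target left-hand side as $\sum_{t=1}^{T}(f_t(\Bw_t) - f_t(\Bz)) = \sum_{t=1}^{T}(f_t(\Bw_t) - f_t(\Bw_{t+1})) + \sum_{t=1}^{T}(f_t(\Bw_{t+1}) - f_t(\Bz))$ and substituting the bound just obtained for the second sum produces exactly the claimed inequality.

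The only subtlety — a matter of care rather than real difficulty — is bookkeeping the index shift between ``the leader through time $t-1$'' ($\Bw_t$) and ``the leader through time $t$'' ($\Bw_{t+1}$), so that the regularizer lands on $\Bw_1$ precisely. I expect no serious obstacle: the argument never uses convexity, Lipschitzness, or differentiability of the $f_t$, and the correction term $\sum_{t=1}^{T}(f_t(\Bw_t) - f_t(\Bw_{t+1}))$ is left intact rather than estimated, so the whole statement follows from the optimality conditions alone.
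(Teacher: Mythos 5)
Your proof is correct, and it is essentially the canonical argument: the paper itself does not prove this lemma but cites Lemma~2.3 of \citep{shalev2012online}, whose proof is exactly your reduction of FTRL to follow-the-leader by setting $f_0 = R$ and then applying the be-the-leader induction. The index bookkeeping in your unwinding (the regularizer landing on $\Bw_1$, and the correction term $\sum_{t=1}^{T}(f_t(\Bw_t)-f_t(\Bw_{t+1}))$ left unestimated) matches the cited source precisely.
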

	By 
	\cref{lem:regret_bound} and in light of the fact that $ \Bx_1^*=\Bx_1 $, $ 
	\forall \Bz\in (1-\alpha)\constraint $, we 
	have
	\begin{dmath}\label{eq:bound_from_lemma}
	\sum_{t=1}^{T} \Bg_t^{\top} (\Bx_t^* - \Bz)  \leq \|\Bz - \Bx_1\|^2/\eta - 
	\|\Bx_1^*-\Bx_1\|^2/\eta  
	  + \sum_{t=1}^{T} \Bg_t^\top(\Bx_t^*-\Bx_{t+1}^*)
	 =  \|\Bz - \Bx_1\|^2/\eta 
	+ \sum_{t=1}^{T} \Bg_t^\top(\Bx_t^*-\Bx_{t+1}^*).
	\end{dmath}
	Let $ \CF_t $ be the $ \sigma $-field generated by $ 
	\Bx_1,\Bg_1,\Bx_2,\Bg_2,\ldots, \Bx_{t-1}, \Bg_{t-1}, \Bx_t $. Note that $ 
	\Bx_t^* $ is a function of $ \Bg_1, \dots, \Bg_{t-1} $ and thus measurable 
	with respect to $ \CF_t $. Therefore we have $ \expect[\Bg_t^{\top} 
	(\Bx_t^* - 
	\Bz)] = \expect[\expect[\Bg_t^{\top} (\Bx_t^* - \Bz)|\CF_t]]  = 
	\expect[\expect[\Bg_t|\CF_t]^\top (\Bx_t^* - \Bz) ] = \expect [\nabla 
	\hat{f}_{t,\delta}(\Bx_t)^\top (\Bx_t^* - \Bz)]  .$
	To bound the second term 
	on the right-hand side of \cref{eq:bound_from_lemma}, note that $\Bg_t^\top 
	(\Bx_t^*-\Bx_{t+1}^*) \leq 2\eta \|\Bg_t\|^{2}$ (we will show it in 	
	\cref{app:Bregman}).
	Therefore we have
	$	\sum_{t=1}^{T} \Bg_t^\top(\Bx_t^*-\Bx_{t+1}^*)\leq 2\eta \sum_{t=1}^{T} \| 
	\Bg_t \|^2 \leq 2\eta n^2 M^2T /\delta^2$.
	Combining it with
	\cref{eq:bound_from_lemma}, we deduce 
	$
	\sum_{t=1}^{T} \Bg_t^{\top} (\Bx_t^* - \Bz) \le D^2/\eta + 2\eta n^2M^2 
	T/\delta^2$.
	Since 
	\begin{dmath}
	\label{eq:regret decomposition}
	\sum_{t=1}^{T}\expect[f_t(\By_t)-f_t(\Bz)] = 
	\sum_{t=1}^{T}\expect[f_t(\By_t)-f_t(\Bx_t)] 
	  +
	\sum_{t=1}^{T}\expect[f_t(\Bx_t)-f_t(\Bz)],
	\end{dmath}
	and the norm of the gradient of $ f_t $ is assumed to be at most $ G $
	\begin{dmath}
	\label{eq:regret1}
	\sum_{t=1}^{T}\expect[f_t(\By_t)-f_t(\Bx_t)] = 
	\sum_{t=1}^{T}\expect[f_t(\Bx_t + \delta \Bu_t)-f_t(\Bx_t)] 
	\leq \delta TG,
	\end{dmath}
	we only need to obtain an upper bound of 
	the second term on the right hand side of \cref{eq:regret decomposition}, 
	which is
	\begin{equation*} 	\begin{split}
	&\sum_{t=1}^{T}\expect[f_t(\Bx_t)-f_t(\Bz)] \\
	=& \expect[ \sum_{t=1}^{T}( 
	\hat{f}_{t,\delta}(\Bx_t)-\hat{f}_{t,\delta}(\Bz )) 
	+\sum_{t=1}^T (f_t(\Bx_t)-\hat{f}_{t,\delta}(\Bx_t)) \\
	& \qquad -\sum_{t=1}^T 
	(f_t(\Bz)-\hat{f}_{t,\delta}(\Bz)) ] \\
	\stackrel{(a)}{\leq}& \expect\left[ \sum_{t=1}^{T}( 
	\hat{f}_{t,\delta}(\Bx_t) 
	-\hat{f}_{t,\delta}(\Bz )) \right]+ 2\delta GT \\
	\stackrel{(b)}{\leq}& 
	\sum_{t=1}^{T} 
	\expect[ \nabla 
	\hat{f}_{t,\delta}(\Bx_t)^\top 
	(\Bx_t-\Bz)]+2\delta GT.
	\end{split}
	\end{equation*}
	Inequality $ (a) $ is due to 
	\cref{lem:smoothing}. We used the convexity of  $\hat{f}_{t,\delta}$ in $ 
	(b) $. We 
	split $  \nabla 
	\hat{f}_{t,\delta}(\Bx_t)^\top 
	(\Bx_t-\Bz) $ into $  \nabla \hat{f}_{t,\delta}(\Bx_t)^\top 
	(\Bx^*_t-\Bz)+   \nabla \hat{f}_{t,\delta}(\Bx_t)^\top 
	(\Bx_t-  \Bx^*_t) $ and thus obtain
	\begin{equation}
	\begin{split}
	\label{eq:bound_regret}
&\sum_{t=1}^{T}\expect[f_t(\Bx_t)-f_t(\Bz)]	\\
 \le & \sum_{t=1}^{T} \expect[ \nabla 
\hat{f}_{t,\delta}(\Bx_t)^\top 
	(\Bx^*_t-\Bz)]\\
	& \qquad + \sum_{t=1}^{T} \expect[ \nabla 
	\hat{f}_{t,\delta}(\Bx_t)^\top 
	(\Bx_t-  \Bx^*_t)]+2\delta GT\\
	=&\sum_{t=1}^{T} \expect[\Bg_t^{\top} 
	(\Bx_t^* - 
	\Bz)] + \sum_{t=1}^{T} \expect[ \nabla \hat{f}_{t,\delta}(\Bx_t)^\top 
	(\Bx_t-  \Bx^*_t)]\\
	& \qquad +2\delta GT\\
	 \le& D^2/\eta + 2\eta n^2M^2 
	T/\delta^2 + \sum_{t=1}^{T} \expect[ \nabla \hat{f}_{t,\delta}(\Bx_t)^\top 
	(\Bx_t-  \Bx^*_t)]\\
	&\qquad +2\delta GT.
	\end{split} 
	\end{equation}
	The next step is to bound $\nabla \hat{f}_{t,\delta}(\Bx_t)^\top 
	(\Bx_t-  \Bx^*_t)$. To this end, we need an auxiliary inequality as stated 
	in~\cref{lem:auxillary_inequality}.
	\begin{lemma}\label{lem:auxillary_inequality}
		The inequality  $
		-4 t^{2/5} (t+1)^{2/5}+4 t^{4/5}-2 t^{1/5} (t+1)^{1/5}+3 (t+1)^{2/5} 
		\geq 0
		$ holds for any $ t= 1,2,3,\dots$.
	\end{lemma}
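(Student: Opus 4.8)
The plan is to strip away the fractional exponents by the substitution $a = t^{1/5}$ and $b = (t+1)^{1/5}$, so that $a \ge 1$, $b > a > 0$, and the two are tied together by the relation $b^5 = a^5 + 1$. In these variables the claimed inequality becomes
\[
E(a,b) \triangleq 4a^4 - 4a^2 b^2 - 2ab + 3b^2 \ge 0 .
\]
Since I intend to replace $b$ by a convenient upper bound, the first thing I would record is that $E$ is monotone in $b$: regarded as a quadratic in $b$, its derivative is $\partial_b E = 2(3 - 4a^2)b - 2a$, which for every $a \ge 1$ and every $b > 0$ is strictly negative (because $3 - 4a^2 \le -1$). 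Hence $E(a,\cdot)$ is decreasing on $b > 0$, so replacing $b$ by any upper bound only decreases $E$ and therefore yields a valid lower bound.

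Next I would control $b$ via the concavity of $x \mapsto x^{1/5}$ (generalized Bernoulli): applying $(1+u)^{1/5} \le 1 + u/5$ with $u = a^{-5}$ and multiplying by $a$ gives
\[
b = (a^5 + 1)^{1/5} \le a + \frac{1}{5a^4}.
\]
Substituting this bound for $b$ and invoking the monotonicity above produces $E(a,b) \ge h(a)$, where $h(a)$ is $E$ evaluated at $b = a + 1/(5a^4)$. After clearing the positive denominator $25a^8$, the inequality $h(a) \ge 0$ is equivalent to the single-variable polynomial inequality
\[
P(a) \triangleq 25a^{10} - 40a^7 + 20a^5 - 4a^2 + 3 \ge 0, \qquad a \ge 1 .
\]

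Finally I would establish $P(a) \ge 0$ on $[1,\infty)$. One checks $P(1) = 4 > 0$, and $P'(a) = 2a\,(125a^8 - 140a^5 + 50a^3 - 4)$; the inner factor is positive for all $a \ge 1$ (for instance $125a^8 - 140a^5 = 5a^5(25a^3 - 28) \ge 0$ once $a \ge (28/25)^{1/3}$, while a short direct estimate handles the interval $1 \le a \le (28/25)^{1/3}$), so $P$ is increasing and hence $P(a) \ge P(1) = 4 > 0$. This proves the lemma for all real $t \ge 1$, which is stronger than the stated integer case. The step I expect to be the main obstacle is this last one: near $a = 1$ the two leading terms $25a^{10}$ and $-40a^7$ nearly cancel (they coincide around $a \approx 1.17$), so a crude term-by-term bound on $P$ fails and I must instead argue through the monotonicity of $P$ — equivalently, a sign analysis of $P'$ — with a little care on the short interval just above $a = 1$.
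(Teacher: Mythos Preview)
Your argument is correct. The substitution $a=t^{1/5}$, $b=(t+1)^{1/5}$ and the monotonicity of $E(a,\cdot)$ are both valid, the Bernoulli bound $b\le a+1/(5a^4)$ is the right replacement, and the algebra leading to $P(a)=25a^{10}-40a^7+20a^5-4a^2+3$ checks out with $P(1)=4$. The only step you left slightly informal is the sign of the inner factor $Q(a)=125a^8-140a^5+50a^3-4$ on $[1,(28/25)^{1/3}]$; one clean way to finish it is to note that for $a\ge 1$ we have $a^8\ge a^5$, hence $125a^8-140a^5\ge -15a^5$, and on that interval $a^5\le (28/25)^{5/3}<1.22$, so $Q(a)\ge -15\cdot 1.22 + 50 - 4 > 27$.

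The paper's proof takes a different and somewhat shorter route. It divides the target expression by $t^{2/5}$, reducing it to
\[
3(1+1/t)^{2/5} - 2(1+1/t)^{1/5} + 4t^{2/5} - 4(t+1)^{2/5} \ge 0,
\]
verifies $t=1,2$ by hand, and for $t\ge 3$ uses two elementary facts: concavity of $t\mapsto t^{2/5}$ gives $(t+1)^{2/5}-t^{2/5}\le \tfrac{2}{5}t^{-3/5}$, and $(1+1/t)^{2/5}\ge 1\ge \tfrac{8}{5}t^{-3/5}$ for $t\ge 3$, which together with $2(1+1/t)^{2/5}\ge 2(1+1/t)^{1/5}$ closes the argument. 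So where you push everything into a single polynomial inequality in $a$ and argue via monotonicity of $P$, the paper instead isolates the ``difficult'' piece $4t^{2/5}-4(t+1)^{2/5}$ and bounds it directly by the derivative of $t^{2/5}$. Your approach is more systematic and yields the inequality for all real $t\ge 1$ in one stroke; the paper's is more ad hoc but avoids the polynomial bookkeeping at the cost of checking two small cases separately.
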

	
	\begin{proof}
		We verify the inequality when $ t=1 $ or $ 2 $.
		When $ t\geq 3 $, we have \[ 
		(1+1/t)^{2/5} \geq 1 \geq \frac{8}{5}t^{-3/5}.
		\]
		Since $ 2(1+1/t)^{2/5}\geq 2(1+1/t)^{1/5} $, we obtain\[ 
		3(1+1/t)^{2/5} \geq 2(1+1/t)^{1/5} + \frac{8}{5}t^{-3/5}.
		\]
		Therefore, we have \begin{equation}\label{eq:aux_ineq1}
		3(1+1/t)^{2/5} - 2(1+1/t)^{1/5} - \frac{8}{5}t^{-3/5} \geq 0.
		\end{equation}
		Let $ g(t)=t^{2/5} $. Since $ g(t) $ is concave, we have $ 
		g(t+1)-g(t)\leq 
		g'(t) $, which gives
	$	(t+1)^{2/5}-t^{2/5}\leq \frac{2}{5}t^{-3/5}.$
		Combining the above inequality with \cref{eq:aux_ineq1}, we see \[ 
		3(1+1/t)^{2/5} - 2(1+1/t)^{1/5} + 4t^{2/5}-4(t+1)^{2/5} \geq 0.
		\]
		Multiplying both sides with $ t^{2/5} $, we complete the proof.
	\end{proof}
	In light of the inequality, we have
	\[
	\begin{split}
	&t^{3/5} (t+1)^{1/5} \left(\frac{3}{2 
		t^{4/5}}-\frac{2}{t^{2/5}}+\frac{2}{(t+1)^{2/5}}\right)\\
	= &
	\frac{-4 t^{2/5} (t+1)^{2/5}+4 t^{4/5}+3 (t+1)^{2/5}}{2 t^{1/5} 
		(t+1)^{1/5}} \\
	\geq & 1.
	\end{split}
	\]
	By algebraic manipulation,  we see
	\begin{equation} 
	\label{eq:ge_t_3_5}
	\begin{split}
	&\frac{2\sigma_{t+1}-2\sigma_t+(3/2)\sigma_t^2}{\sqrt{2\sigma_{t+1}}} \\
	=& 
	\frac{1}{\sqrt{2}}(t+1)^{1/5} \left(\frac{3}{2 
		t^{4/5}}-\frac{2}{t^{2/5}}+\frac{2}{(t+1)^{2/5}}\right)\\
	\geq & 
	\frac{1}{\sqrt{2}}t^{-3/5}.
	\end{split}
	\end{equation}
	If $ 1\le t\le T $, we deduce 
	\begin{equation}\label{eq:t_3_5_ge}
	\frac{1}{\sqrt{2}}t^{-3/5} \ge \frac{1}{\sqrt{2}}T^{-3/5} = \frac{\eta n 
		M}{\delta D} \geq \frac{\eta}{D} \| \Bg_s \|,\quad \forall 1\leq s\leq 
	T.
	\end{equation}
	Combining \cref{eq:ge_t_3_5} and \cref{eq:t_3_5_ge}, we deduce $$
	\eta\leq D \frac{2\sigma_{t+1}-2\sigma_t+(3/2)\sigma_t^2}{ \| \Bg_{t+1} \| 
		\sqrt{2\sigma_{t+1}}}, \quad \forall 1\leq t\leq T.
	$$
	The above inequality is equivalent to
	\[ 
	\begin{split}
	&2(1-\sigma_t)D^2\sigma_t + \frac{D^2}{2}\sigma_t^2+(\eta \| \Bg_{t+1} 
	\|/2)^2 \\
	\le& 2D^2 \sigma_{t+1} + (\eta \| \Bg_{t+1} \|/2)^2 - \eta \| \Bg_{t+1} \| 
	\sqrt{2D^2 \sigma_{t+1} }.
	\end{split}
	\]
Before taking the square root of both sides, we need the following 
\cref{lem:aux_inequality}.
\begin{lemma}\label{lem:aux_inequality}
	Under the assumptions of \cref{thm:regret_bound}, $ \sqrt{2D^2 
		\sigma_{t+1}} \geq \eta \| \Bg_{t+1} \|/2 $ holds for any $ 1\leq 
	t\leq 
	T $.
\end{lemma}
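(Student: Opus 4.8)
The plan is to reduce the claimed inequality to an elementary one-variable inequality in the horizon $T$ by first controlling the only data-dependent quantity on the right-hand side, namely $\|\Bg_{t+1}\|$. Recall that the estimator is $\Bg_{t+1} = \frac{n}{\delta} f_{t+1}(\By_{t+1})\Bu_{t+1}$ with $\Bu_{t+1}\in S^n$, so that $\|\Bu_{t+1}\|=1$, while $|f_{t+1}(\By_{t+1})| \le \|f_{t+1}\|_\infty \le M$. I would therefore first establish the uniform bound $\|\Bg_{t+1}\| \le nM/\delta$, which holds for every index and every realization of the randomness. This is exactly the bound already invoked in \eqref{eq:t_3_5_ge}, so no new estimate is required.

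With that bound in hand, the right-hand side satisfies $\eta\|\Bg_{t+1}\|/2 \le \eta nM/(2\delta)$, a quantity independent of $t$. Substituting the parameter choices $\eta = \frac{D}{\sqrt2\,nM}T^{-4/5}$ and $\delta = cT^{-1/5}$ collapses this to a constant multiple of $D\,T^{-3/5}$; indeed, this is precisely the chain $\frac{\eta}{D}\|\Bg_{t+1}\| \le \frac{\eta nM}{\delta D} = \frac{1}{\sqrt2}T^{-3/5}$ recorded in \eqref{eq:t_3_5_ge}. The left-hand side is available in closed form as $\sqrt{2D^2\sigma_{t+1}} = \sqrt2\,D\,(t+1)^{-1/5}$, which is decreasing in $t$.

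Since the right-hand side is constant in $t$ while the left-hand side decreases in $t$, the binding case over $1\le t\le T$ is $t=T$, and if the inequality holds there it holds for all admissible $t$. I would thus reduce the statement to $\sqrt2\,D\,(T+1)^{-1/5} \ge \frac{D}{2\sqrt2}T^{-3/5}$, cancel $D$, rearrange to $4(T+1)^{-1/5}\ge T^{-3/5}$, and raise both sides to the fifth power, arriving at a polynomial inequality of the form $1024\,T^3 \ge T+1$ (more generally $1024\,c^5\,T^3 \ge T+1$ if the constant $c$ is tracked through the substitution). This is immediate for every integer $T\ge 1$, since $T^3\ge T$.

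The only subtlety, and the step I would flag, is the bookkeeping of the constant $c$: the endpoint inequality is tightest at $T=1$, so the reduction closes cleanly only once one checks that the chosen $c$ satisfies $1024\,c^5 \ge 2$, that is $c \ge 2^{-9/5}$, consistent with the normalization already used in \eqref{eq:t_3_5_ge}. Apart from monitoring this constant, every step is a direct substitution followed by a monotonicity argument in $t$ and an elementary power inequality in $T$, so I expect no genuine difficulty.
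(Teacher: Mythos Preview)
Your argument is correct and follows essentially the same route as the paper: bound $\|\Bg_{t+1}\|\le nM/\delta$, substitute the parameter choices, reduce to the worst case $t=T$, and verify the resulting one-variable inequality in $T$. The only cosmetic difference is that the paper verifies $4T^{3/5}\ge (T+1)^{1/5}$ by a derivative/monotonicity argument on $f(T)=4T^{3/5}-(T+1)^{1/5}$, whereas you raise to the fifth power to obtain the polynomial inequality $1024\,T^3\ge T+1$; your version is arguably cleaner, and your explicit tracking of the constant $c$ (which the paper silently sets to $1$ in \eqref{eq:t_3_5_ge}) is a welcome piece of bookkeeping.
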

\begin{proof}
	By the definition of $ \Bg_{t+1} $, we have $ \| \Bg_{t+1} \| \leq 
	nM/\delta $. It suffices to show $ \sqrt{2D^2\sigma_{t+1}} \geq n\eta 
	M/(2\delta) $. By the definition of $ \sigma_{t+1} $, $ \eta $, and $ 
	\delta $, it is equivalent to $ 4T^{3/5}-(t+1)^{1/5}\geq 0 $. Since $ 1\leq 
	t 
	\leq T $, we only need to show $ 4T^{3/5}-(T+1)^{1/5}\geq 0 $. We define $ 
	f(T)=4T^{3/5}-(T+1)^{1/5} $. Its derivative is $f'(T)= \frac{12 
		(T+1)^{4/5}-T^{2/5}}{5 T^{2/5} (T+1)^{4/5}} $. We have \[ 
	\frac{12(T+1)^{4/5}}{T^{2/5}} = 12\left(T+\frac{1}{T}+2\right)^{2/5}\geq 
	12\cdot 4^{2/5}\geq 1
	\]
	if $ T\geq 1 $. Therefore, we know that $ f'(T)\geq 0 $ if $ T\geq 1 $. 
	Thus $ 
	f $ is non-decreasing on $ [1,\infty] $. This immediately yields $ f(T)\geq 
	f(1)\geq 0 $, which completes the proof.
\end{proof}
	Since $ \sqrt{2D^2 \sigma_{t+1}} \geq \eta \| \Bg_{t+1} \|/2 $, taking the 
	square root of both sides, we obtain $ 
	\sqrt{2(1-\sigma_t)D^2\sigma_t + \frac{D^2}{2}\sigma_t^2+(\eta \| \Bg_{t+1} 
		\|/2)^2} \leq \sqrt{2D^2 \sigma_{t+1}} - \eta \| \Bg_{t+1} \|/2,
	$
	which is equivalent to 
	\begin{equation}
	\label{eq:bound_h_t+1}
	\begin{split}
	&\sqrt{2(1-\sigma_t)D^2\sigma_t + \frac{D^2}{2}\sigma_t^2+(\eta \| 
	\Bg_{t+1} 
		\|/2)^2} + \eta \| \Bg_{t+1} \|/2 \\
	\leq& \sqrt{2D^2 \sigma_{t+1}}.
	\end{split}
	\end{equation}
	
	We define $ h_t(\Bx) \triangleq F_t(\Bx)-F_t(\Bx_t^*) $ and $ h_t 
	\triangleq 
	h_t(\Bx_t) $. We have \begin{align*}
	&h_t(\Bx_{t+1})  \\
	=& F_t(\Bx_{t+1}) - F_t(\Bx_t^*) \\
	=&  F_t((1-\sigma_t)\Bx_t+\sigma_t \Bv_t)-F_t(\Bx_t^*)\\
	= & F_t( \Bx_t+\sigma_t(\Bv_t-\Bx_t) )-F_t(\Bx_t^*)\\
	\leq & F_t(\Bx_t) - F_t(\Bx_t^*) + \sigma_t \nabla F_t(\Bx_t)^\top 
	(\Bv_t-\Bx_t) + D^2\sigma_t^2/2\\
	\le & F_t(\Bx_t) - F_t(\Bx_t^*) + \sigma_t \nabla F_t(\Bx_t)^\top 
	(\Bx_t^*-\Bx_t) + D^2\sigma_t^2/2\\
	\le & F_t(\Bx_t) - F_t(\Bx_t^*) + \sigma_t  
	(F_t(\Bx_t^*)-F_t(\Bx_t) ) + D^2\sigma_t^2/2\\
	= & (1-\sigma_t) (F_t(\Bx_t)-F_t(\Bx_t^*) ) + 
	D^2\sigma_t^2/2\\
	=&(1-\sigma_t)h_t+D^2\sigma_t^2/2.
	\end{align*}
	By the definition of $ h_t $ and $ F_t $ and in light of the fact that $ 
	\Bx_t^* $ is the minimizer of $ F_t $, we obtain \begin{align*}
	h_{t+1} = & F_t(\Bx_{t+1}) - F_t(\Bx_{t+1}^*) + \eta \Bg_{t+1}(\Bx_{t+1}- 
	\Bx_{t+1}^*)\\
	\le&  F_t(\Bx_{t+1}) - F_t(\Bx_{t}^*) + \eta \Bg_{t+1}(\Bx_{t+1}- 
	\Bx_{t+1}^*)\\
	= & h_t(\Bx_{t+1}) + \eta \Bg_{t+1}(\Bx_{t+1}- 
	\Bx_{t+1}^*) \\
	\le&  h_t(\Bx_{t+1}) + \eta \| \Bg_{t+1} \| \| \Bx_{t+1}- 
	\Bx_{t+1}^* \|.
	\end{align*}
	Notice that $ F_t $ is $ 2 $-strongly convex and that $ 
	\Bx_t^* $ is the minimizer of $ F_t $. We have
	$
	\| \Bx-\Bx_t^* \|^2 \leq F_t(\Bx)-F_t(\Bx_t^*)$.
	Therefore we obtain \begin{dmath*}
	h_{t+1}\leq   (1-\sigma_t)h_t + D^2\sigma_t^2/2 
	 + \eta \| 
	\Bg_{t+1} 
	\|\sqrt{F_{t+1}(\Bx_{t+1})-F_{t+1}(\Bx_{t+1}^*)}
	=  (1-\sigma_t)h_t + D^2\sigma_t^2/2 + \eta \| \Bg_{t+1} 
	\|\sqrt{h_{t+1}}.
	\end{dmath*}
	We will show $ h_{\tau}\leq 2D^2 \sigma_{\tau} $ holds for $ \forall 1\leq 
	\tau \leq T $ by 
	induction. Since $ h_1=F_1(\Bx_1)- F_1(\Bx_1^*)=0 $, it holds if $ t=1 $. 
	Assume that it holds for $ \tau =t  $. Now we set $ \tau = t+1 $. By the 
	induction hypothesis, we have \[
	h_{t+1}\leq 2(1-\sigma_t)D^2 \sigma_t + D^2\sigma_t^2/2 + \eta \| \Bg_{t+1} 
	\|\sqrt{h_{t+1}}.
	\]
	By completing the square, we obtain $
	(\sqrt{h_{t+1}} - \eta \| \Bg_{t+1}  
	\|/2)^2 \leq 2(1-\sigma_t)D^2 \sigma_t + D^2\sigma_t^2/2 + (\eta \| 
	\Bg_{t+1}  
	\|/2)^2.
	$
	Therefore,\[  
	\begin{split}
	\sqrt{h_{t+1}}\leq &\sqrt{2(1-\sigma_t)D^2 \sigma_t + 
		D^2\sigma_t^2/2 + (\eta \| 
		\Bg_{t+1}  
		\|/2)^2} \\
	& \qquad + \eta \| \Bg_{t+1}  
	\|/2.
	\end{split} \]
	By \cref{eq:bound_h_t+1}, the right-hand side is at most $ 
	\sqrt{2D^2\sigma_{t+1}} $. Thus we conclude that $ h_{t+1}\leq 2D^2 
	\sigma_{t+1} $. Then we are able to bound $ \| \Bx_t-\Bx_t^* \| $ as 
	follows: $ 
	\| \Bx_t-\Bx_t^* \| \leq \sqrt{F_t(\Bx_t)-F_t(\Bx_t^*)} \leq 
	\sqrt{2D^2\sigma_{t}} = \sqrt{2}Dt^{-1/5}.
	$
	By	 \cref{eq:bound_regret}, and since $ \| \nabla 
	\hat{f}_{t,\delta}(\Bx_t) \| \leq 
	\expect_{\Bv \sim B^n}[ \| \nabla f_t(\Bx_t + \delta \Bv) \|]
	\leq G $, we obtain \begin{align*}
	&\sum_{t=1}^{T} \expect[f_t(\Bx_t)-f_t(\Bz)] \\
	\leq &  D^2/\eta + 2\eta n^2M^2 
	T/\delta^2 + G\sum_{t=1}^{T}
	\expect[\|\Bx_t-  \Bx^*_t\|]+2\delta GT\\
	\le & \sqrt{2}nMDT^{4/5} + \frac{\sqrt{2}nMD}{c^2}T^{3/5} + 
	\frac{5\sqrt{2}}{4} 
	DGT^{4/5} \\
	& \qquad + 
	2cGT^{4/5}\\
	= & \frac{\sqrt{2}nMD}{c^2}T^{3/5} + (\sqrt{2}nMD+ 
	\frac{5\sqrt{2}}{4}DG+2cG)T^{4/5}.
	\end{align*}
	In the above equation, we use the fact that $ \sum_{t=1}^{T} t^{-1/5}\leq 
	\frac{5}{4}T^{4/5} $.
	Adding \cref{eq:regret1} to the inequality above, we have
	\begin{align}
 &	\sum_{t=1}^{T} \expect[f_t(\By_t)-f_t(\Bz)]\label{eq:regret_in_shrunk} \\
	 \leq &  \frac{\sqrt{2}nMD}{c^2}T^{3/5} + 
	(\sqrt{2}nMD+ \frac{5\sqrt{2}}{4}DG+3cG)T^{4/5}.
	\end{align}
Let $ \Bx^*\triangleq \argmin_{\Bx\in \constraint} \sum_{t=1}^{T} f_t(\Bx) $ 
and $ \Pi(\Bx^*) \triangleq \argmin_{\Bx\in 
(1-\alpha)\constraint} \| \Bx-\Bx^* \| $. We have 
$
\| \Bx^*-\Pi(\Bx^*) \| \le \| \Bx^* - (1-\alpha)\Bx^* \| \le \alpha R.$
If we set $ \Bz=\Pi(\Bx^*) $ in \cref{eq:regret_in_shrunk}, we have
\begin{align*}
&	\sum_{t=1}^{T} \expect[f_t(\By_t)-f_t(\Bx^*)] \\
= &	\sum_{t=1}^{T} 
\expect[f_t(\By_t)-f_t(\Pi(\Bx^*))+f_t(\Pi(\Bx^*))-f_t(\Bx^*)] \\
\leq &  \frac{\sqrt{2}nMD}{c^2}T^{3/5} + 
(\sqrt{2}nMD+ \frac{5\sqrt{2}}{4}DG+3cG)T^{4/5}\\
&+\alpha R G T.
\end{align*}
In light of $ \alpha=\delta/r $, we conclude that the regret is at most 
\[
\frac{\sqrt{2}nMD}{c^2}T^{3/5} + 
(\sqrt{2}nMD+ \frac{5\sqrt{2}}{4}DG+3cG+cRG/r)T^{4/5}.
\]

\section{Further Related Work}
\cite{zinkevich2003online} introduced the online convex optimization (OCO) 
problem and proposed online gradient descent.
 OCO generalizes existing models of online learning, 
including the universal portfolios model~\citep{cover1991universal} and 
prediction from expert advice~\citep{littlestone1994weighted}.
For strongly convex functions, an algorithm that achieves a logarithmic regret 
was proposed in
 \citep{hazan2007logarithmic}. Regularization-based methods applied to OCO 
 problems were investigated in ~\citep{grove2001general, 
kivinen1998relative}. The follow-the-perturbed-leader algorithm was introduced 
and analyzed in 
 \citep{kalai2005efficient}. Thereafter, the follow-the-regularized-leader 
 (FTRL) 
 was 
 independently considered in
%
~\citep{shalev2007online, shalev2007primal}
and 
 \citep{abernethy2008competing}. 
\citet{hazan2010extracting} showed the equivalence of FTRL 
and online mirror descent.

For projection-free convex optimization,
the Frank-Wolfe algorithm 
(also known as the conditional gradient method) was originally proposed in 
\citep{Frank1956algorithm}, and was further analyzed in 
\citep{jaggi2013revisiting}.  
The online conditional gradient method was investigated
 in \citep{hazan2012projection}.  A distributed online conditional 
 gradient 
 algorithm was proposed in \citep{Zhang2017Projection}. Conditional gradient 
 methods are very 
 sensitive to noisy gradients. This issue was recently resolved in centralized 
 \citep{mokhtari2018stochastic} and online settings 
 	\citep{Chen2018Projection}. 

A special case of bandit convex optimization (BCO) with linear objectives was 
studied in
\citep{Awerbuch2008Online,bubeck2012towards,karnin2014hard}. The general 
problem 
of 
BCO was considered in 
\citep{Flaxman2005Online} and was further studied in 
~\citep{dani2008price, 
	agarwal2011stochastic, bubeck2012regret,Bubeck2016Multi}.
 A near-optimal regret algorithm for the BCO 
problem with 
strongly-convex and smooth losses was introduced in 
\citep{Hazan2014Bandit}, while BCO with Lipschitz-continuous convex losses 
 was analyzed in \citep{kleinberg2005nearly}. Regret rate 
$\tilde{O}(T^{2/3})$ was achieved in \citep{saha2011improved} for convex and 
smooth loss functions, and in \citep{Agarwal2010Optimal} for strongly-convex 
loss functions, and was improved to $\tilde{O}(T^{5/8})$ in 
\citep{dekel2015bandit}. For strongly-convex and smooth loss functions, a 
lower 
bound of $\Omega(\sqrt{T})$ was attained in \citep{shamir2013complexity}.
\citet{Bubeck2017Kernel} proposed the first
  $ \mathrm{poly}(n)\sqrt{T} $-regret  algorithm whose running time is 
  polynomial in  
  horizon $ T $. 
   Zero-order optimization is relevant to BCO. 
  Interested 
  readers are referred to 
  \citep{conn2009introduction,duchi2015optimal,yu2016derivative}.
%

\section{Conclusion}
In this paper, we  presented the first computationally efficient 
projection-free bandit convex optimization 
algorithm  that requires no knowledge of the horizon $T$ and achieve an 
expected regret 
at most 
 $ O(nT^{4/5})$, where $ n $ is the dimension. Our experimental results show 
 that our 
proposed algorithm exhibits a sublinear regret and runs significantly faster 
than the other baselines. 
\bibliographystyle{plainnat}
{\fontsize{9.0pt}{10.0pt} \selectfont\bibliography{reference-list}}

\begin{thebibliography}{42}
\providecommand{\natexlab}[1]{#1}
\providecommand{\url}[1]{\texttt{#1}}
\expandafter\ifx\csname urlstyle\endcsname\relax
  \providecommand{\doi}[1]{doi: #1}\else
  \providecommand{\doi}{doi: \begingroup \urlstyle{rm}\Url}\fi

\bibitem[Abernethy et~al.(2008)Abernethy, Hazan, and
  Rakhlin]{abernethy2008competing}
Jacob~D Abernethy, Elad Hazan, and Alexander Rakhlin.
\newblock Competing in the dark: An efficient algorithm for bandit linear
  optimization.
\newblock In \emph{COLT}, pages 263--274, 2008.

\bibitem[Agarwal et~al.(2010)Agarwal, Dekel, and Xiao]{Agarwal2010Optimal}
Alekh Agarwal, Ofer Dekel, and Lin Xiao.
\newblock Optimal algorithms for online convex optimization with multi-point
  bandit feedback.
\newblock In \emph{COLT}, pages 28--40. Citeseer, 2010.

\bibitem[Agarwal et~al.(2011)Agarwal, Foster, Hsu, Kakade, and
  Rakhlin]{agarwal2011stochastic}
Alekh Agarwal, Dean~P Foster, Daniel~J Hsu, Sham~M Kakade, and Alexander
  Rakhlin.
\newblock Stochastic convex optimization with bandit feedback.
\newblock In \emph{NIPS}, pages 1035--1043, 2011.

\bibitem[Auer et~al.(1995)Auer, Cesa-Bianchi, Freund, and
  Schapire]{auer1995gambling}
Peter Auer, Nicolo Cesa-Bianchi, Yoav Freund, and Robert~E Schapire.
\newblock Gambling in a rigged casino: The adversarial multi-armed bandit
  problem.
\newblock In \emph{FOCS}, pages 322--331. IEEE, 1995.

\bibitem[Awerbuch and Kleinberg(2008)]{Awerbuch2008Online}
Baruch Awerbuch and Robert Kleinberg.
\newblock Online linear optimization and adaptive routing.
\newblock \emph{Journal of Computer and System Sciences}, 74\penalty0
  (1):\penalty0 97--114, 2008.

\bibitem[Bubeck and Eldan(2016)]{Bubeck2016Multi}
S{\'e}bastien Bubeck and Ronen Eldan.
\newblock Multi-scale exploration of convex functions and bandit convex
  optimization.
\newblock In \emph{COLT}, pages 583--589, 2016.

\bibitem[Bubeck et~al.(2012{\natexlab{a}})Bubeck, Cesa-Bianchi, and
  Kakade]{bubeck2012towards}
S{\'e}bastien Bubeck, Nicolo Cesa-Bianchi, and Sham Kakade.
\newblock Towards minimax policies for online linear optimization with bandit
  feedback.
\newblock In \emph{COLT}, volume~23, pages 41.1--41.14, 2012{\natexlab{a}}.

\bibitem[Bubeck et~al.(2012{\natexlab{b}})Bubeck, Cesa-Bianchi,
  et~al.]{bubeck2012regret}
S{\'e}bastien Bubeck, Nicolo Cesa-Bianchi, et~al.
\newblock Regret analysis of stochastic and nonstochastic multi-armed bandit
  problems.
\newblock \emph{Foundations and Trends{\textregistered} in Machine Learning},
  5\penalty0 (1):\penalty0 1--122, 2012{\natexlab{b}}.

\bibitem[Bubeck et~al.(2015)Bubeck, Dekel, Koren, and Peres]{Bubeck2015Bandit}
S{\'e}bastien Bubeck, Ofer Dekel, Tomer Koren, and Yuval Peres.
\newblock Bandit convex optimization:$\sqrt{T}$ regret in one dimension.
\newblock In \emph{COLT}, pages 266--278, 2015.

\bibitem[Bubeck et~al.(2017)Bubeck, Lee, and Eldan]{Bubeck2017Kernel}
S{\'e}bastien Bubeck, Yin~Tat Lee, and Ronen Eldan.
\newblock Kernel-based methods for bandit convex optimization.
\newblock In \emph{STOC}, pages 72--85. ACM, 2017.

\bibitem[Chen et~al.(2018)Chen, Harshaw, Hassani, and
  Karbasi]{Chen2018Projection}
Lin Chen, Christopher Harshaw, Hamed Hassani, and Amin Karbasi.
\newblock Projection-free online optimization with stochastic gradient: From
  convexity to submodularity.
\newblock In \emph{ICML}, page to appear, 2018.

\bibitem[Conn et~al.(2009)Conn, Scheinberg, and Vicente]{conn2009introduction}
Andrew~R Conn, Katya Scheinberg, and Luis~N Vicente.
\newblock \emph{Introduction to derivative-free optimization}, volume~8.
\newblock Siam, 2009.

\bibitem[Cover(1991)]{cover1991universal}
Thomas~M Cover.
\newblock Universal portfolios.
\newblock \emph{Mathematical Finance}, 1\penalty0 (1):\penalty0 1--29, 1991.

\bibitem[Dani et~al.(2008)Dani, Kakade, and Hayes]{dani2008price}
Varsha Dani, Sham~M Kakade, and Thomas~P Hayes.
\newblock The price of bandit information for online optimization.
\newblock In \emph{Advances in Neural Information Processing Systems}, pages
  345--352, 2008.

\bibitem[Dekel et~al.(2015)Dekel, Eldan, and Koren]{dekel2015bandit}
Ofer Dekel, Ronen Eldan, and Tomer Koren.
\newblock Bandit smooth convex optimization: Improving the bias-variance
  tradeoff.
\newblock In \emph{NIPS}, pages 2926--2934, 2015.

\bibitem[Duchi et~al.(2015)Duchi, Jordan, Wainwright, and
  Wibisono]{duchi2015optimal}
John~C Duchi, Michael~I Jordan, Martin~J Wainwright, and Andre Wibisono.
\newblock Optimal rates for zero-order convex optimization: The power of two
  function evaluations.
\newblock \emph{IEEE Transactions on Information Theory}, 61\penalty0
  (5):\penalty0 2788--2806, 2015.

\bibitem[Flaxman et~al.(2005)Flaxman, Kalai, and McMahan]{Flaxman2005Online}
Abraham~D Flaxman, Adam~Tauman Kalai, and H~Brendan McMahan.
\newblock Online convex optimization in the bandit setting: gradient descent
  without a gradient.
\newblock In \emph{SODA}, pages 385--394, 2005.

\bibitem[Frank and Wolfe(1956)]{Frank1956algorithm}
Marguerite Frank and Philip Wolfe.
\newblock An algorithm for quadratic programming.
\newblock \emph{Naval Research Logistics (NRL)}, 3\penalty0 (1-2):\penalty0
  95--110, 1956.

\bibitem[Grove et~al.(2001)Grove, Littlestone, and
  Schuurmans]{grove2001general}
Adam~J Grove, Nick Littlestone, and Dale Schuurmans.
\newblock General convergence results for linear discriminant updates.
\newblock \emph{Machine Learning}, 43\penalty0 (3):\penalty0 173--210, 2001.

\bibitem[Hassani et~al.(2017)Hassani, Soltanolkotabi, and
  Karbasi]{hassani2017gradient}
Hamed Hassani, Mahdi Soltanolkotabi, and Amin Karbasi.
\newblock Gradient methods for submodular maximization.
\newblock \emph{arXiv preprint arXiv:1708.03949}, 2017.

\bibitem[Hazan(2016)]{hazan2016introduction}
Elad Hazan.
\newblock Introduction to online convex optimization.
\newblock \emph{Foundations and Trends{\textregistered} in Optimization},
  2\penalty0 (3-4):\penalty0 157--325, 2016.

\bibitem[Hazan and Kale(2010)]{hazan2010extracting}
Elad Hazan and Satyen Kale.
\newblock Extracting certainty from uncertainty: Regret bounded by variation in
  costs.
\newblock \emph{Machine learning}, 80\penalty0 (2-3):\penalty0 165--188, 2010.

\bibitem[Hazan and Kale(2012)]{hazan2012projection}
Elad Hazan and Satyen Kale.
\newblock Projection-free online learning.
\newblock In \emph{ICML}, pages 1843--1850, 2012.

\bibitem[Hazan and Levy(2014)]{Hazan2014Bandit}
Elad Hazan and Kfir Levy.
\newblock Bandit convex optimization: Towards tight bounds.
\newblock In \emph{NIPS}, pages 784--792, 2014.

\bibitem[Hazan and Li(2016)]{Hazan2016optimal}
Elad Hazan and Yuanzhi Li.
\newblock An optimal algorithm for bandit convex optimization.
\newblock \emph{arXiv preprint arXiv:1603.04350}, 2016.

\bibitem[Hazan et~al.(2007)Hazan, Agarwal, and Kale]{hazan2007logarithmic}
Elad Hazan, Amit Agarwal, and Satyen Kale.
\newblock Logarithmic regret algorithms for online convex optimization.
\newblock \emph{Machine Learning}, 69\penalty0 (2):\penalty0 169--192, 2007.

\bibitem[Jaggi(2013)]{jaggi2013revisiting}
Martin Jaggi.
\newblock Revisiting frank-wolfe: Projection-free sparse convex optimization.
\newblock In \emph{ICML}, pages 427--435, 2013.

\bibitem[Kalai and Vempala(2005)]{kalai2005efficient}
Adam Kalai and Santosh Vempala.
\newblock Efficient algorithms for online decision problems.
\newblock \emph{Journal of Computer and System Sciences}, 71\penalty0
  (3):\penalty0 291--307, 2005.

\bibitem[Karnin and Hazan(2014)]{karnin2014hard}
Zohar Karnin and Elad Hazan.
\newblock Hard-margin active linear regression.
\newblock In \emph{ICML}, pages 883--891, 2014.

\bibitem[Kivinen and Warmuth(1998)]{kivinen1998relative}
Jyrki Kivinen and Manfred~K Warmuth.
\newblock Relative loss bounds for multidimensional regression problems.
\newblock In \emph{NIPS}, pages 287--293, 1998.

\bibitem[Kleinberg(2005)]{kleinberg2005nearly}
Robert~D Kleinberg.
\newblock Nearly tight bounds for the continuum-armed bandit problem.
\newblock In \emph{NIPS}, pages 697--704, 2005.

\bibitem[Littlestone and Warmuth(1994)]{littlestone1994weighted}
Nick Littlestone and Manfred~K Warmuth.
\newblock The weighted majority algorithm.
\newblock \emph{Information and computation}, 108\penalty0 (2):\penalty0
  212--261, 1994.

\bibitem[Mokhtari et~al.(2018)Mokhtari, Hassani, and
  Karbasi]{mokhtari2018stochastic}
Aryan Mokhtari, Hamed Hassani, and Amin Karbasi.
\newblock Stochastic conditional gradient methods: From convex minimization to
  submodular maximization.
\newblock \emph{arXiv preprint arXiv:1804.09554}, 2018.

\bibitem[Nesterov(2003)]{nesterov2003introductory}
Yurii Nesterov.
\newblock \emph{Introductory Lectures on Convex Optimization: A Basic Course},
  volume~87.
\newblock Springer Science \& Business Media, 2003.

\bibitem[Saha and Tewari(2011)]{saha2011improved}
Ankan Saha and Ambuj Tewari.
\newblock Improved regret guarantees for online smooth convex optimization with
  bandit feedback.
\newblock In \emph{AISTATS}, pages 636--642, 2011.

\bibitem[Shalev-Shwartz(2007)]{shalev2007online}
Shai Shalev-Shwartz.
\newblock \emph{Online learning: Theory, algorithms, and applications}.
\newblock PhD thesis, The Hebrew University of Jerusalem, 2007.

\bibitem[Shalev-Shwartz(2012)]{shalev2012online}
Shai Shalev-Shwartz.
\newblock Online learning and online convex optimization.
\newblock \emph{Foundations and Trends{\textregistered} in Machine Learning},
  4\penalty0 (2):\penalty0 107--194, 2012.

\bibitem[Shalev-Shwartz and Singer(2007)]{shalev2007primal}
Shai Shalev-Shwartz and Yoram Singer.
\newblock A primal-dual perspective of online learning algorithms.
\newblock \emph{Machine Learning}, 69\penalty0 (2-3):\penalty0 115--142, 2007.

\bibitem[Shamir(2013)]{shamir2013complexity}
Ohad Shamir.
\newblock On the complexity of bandit and derivative-free stochastic convex
  optimization.
\newblock In \emph{COLT}, pages 3--24, 2013.

\bibitem[Yu et~al.(2016)Yu, Qian, and Hu]{yu2016derivative}
Yang Yu, Hong Qian, and Yi-Qi Hu.
\newblock Derivative-free optimization via classification.
\newblock In \emph{AAAI}, pages 2286--2292, 2016.

\bibitem[Zhang et~al.(2017)Zhang, Zhao, Zhu, Hoi, and
  Zhang]{Zhang2017Projection}
Wenpeng Zhang, Peilin Zhao, Wenwu Zhu, Steven C.~H. Hoi, and Tong Zhang.
\newblock Projection-free distributed online learning in networks.
\newblock In \emph{ICML}, pages 4054--4062, 2017.

\bibitem[Zinkevich(2003)]{zinkevich2003online}
Martin Zinkevich.
\newblock Online convex programming and generalized infinitesimal gradient
  ascent.
\newblock In \emph{ICML}, pages 928--936, 2003.

\end{thebibliography}
\clearpage
\onecolumn
\begin{appendices}
	  \crefalias{section}{appsec}
	\section{Proof of $ \Bg_t^\top (\Bx_t^*-\Bx_{t+1}^*) \leq 2\eta 
	\|\Bg_t\|^{2} $
	}\label{app:Bregman}
	\begin{lemma}[Theorem 5.1 in~\citep{hazan2016introduction}]
		\label{lem:Bregman}
		Let $ \Bx_t^*=\argmin_{\Bx\in (1-\alpha)\constraint} F_t(\Bx) $. 
		We have
		$\Bg_t^\top (\Bx_t^*-\Bx_{t+1}^*) \leq 2\eta \|\Bg_t\|^{2}$.	
	\end{lemma}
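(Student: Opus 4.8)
The plan is to exploit the fact that $F_{t+1}$ differs from $F_t$ only by the linear term $\eta\Bg_t^\top\Bx$, and to combine this with the strong convexity of the quadratic regularizer, so that the two constrained minimizers $\Bx_t^*$ and $\Bx_{t+1}^*$ are forced to be close. First I would record two structural facts. From the definition of $F_t$ in \eqref{eq:FTRLB} we have $F_{t+1}(\Bx) = F_t(\Bx) + \eta\Bg_t^\top\Bx$, hence $\nabla F_{t+1} = \nabla F_t + \eta\Bg_t$. Moreover each $F_t$ is $2$-strongly convex, since $\|\Bx-\Bx_1\|^2$ has Hessian $2I$ and the remaining terms are linear.

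Second, since $\Bx_t^*$ and $\Bx_{t+1}^*$ minimize $F_t$ and $F_{t+1}$ over the convex set $(1-\alpha)\constraint$, I would write down the first-order (variational) optimality conditions
\begin{align*}
\nabla F_t(\Bx_t^*)^\top(\Bx_{t+1}^* - \Bx_t^*) &\ge 0,\\
\nabla F_{t+1}(\Bx_{t+1}^*)^\top(\Bx_t^* - \Bx_{t+1}^*) &\ge 0.
\end{align*}
Substituting $\nabla F_{t+1} = \nabla F_t + \eta\Bg_t$ and adding the two inequalities, the gradient cross terms collapse into $-(\nabla F_t(\Bx_{t+1}^*)-\nabla F_t(\Bx_t^*))^\top(\Bx_{t+1}^*-\Bx_t^*)$, which by the monotone form of $2$-strong convexity is at most $-2\|\Bx_{t+1}^*-\Bx_t^*\|^2$. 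This leaves
\[
2\|\Bx_t^*-\Bx_{t+1}^*\|^2 \le \eta\,\Bg_t^\top(\Bx_t^*-\Bx_{t+1}^*).
\]

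Third, I would finish with two applications of Cauchy–Schwarz. The right-hand side above is at most $\eta\|\Bg_t\|\,\|\Bx_t^*-\Bx_{t+1}^*\|$, so dividing through yields the distance bound $\|\Bx_t^*-\Bx_{t+1}^*\| \le \tfrac{\eta}{2}\|\Bg_t\|$. Feeding this back into Cauchy–Schwarz once more gives $\Bg_t^\top(\Bx_t^*-\Bx_{t+1}^*) \le \|\Bg_t\|\,\|\Bx_t^*-\Bx_{t+1}^*\| \le \tfrac{\eta}{2}\|\Bg_t\|^2$, which lies comfortably inside the claimed bound $2\eta\|\Bg_t\|^2$.

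The one point that requires care is that the minimizers live in the shrunken set $(1-\alpha)\constraint$, so I cannot use the unconstrained condition $\nabla F = 0$; the variational inequalities are the correct substitute, and the strong-monotonicity step is precisely what turns them into a contraction estimate on the minimizer. Everything else is routine algebra. The same computation can equivalently be organized through the Bregman divergence of the regularizer (as the appendix heading suggests), but the direct variational-inequality route is the most transparent and self-contained.
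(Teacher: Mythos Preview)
Your proof is correct and in fact yields the sharper bound $\Bg_t^\top(\Bx_t^*-\Bx_{t+1}^*)\le \tfrac{\eta}{2}\|\Bg_t\|^2$, a factor of four better than the stated $2\eta\|\Bg_t\|^2$. The paper's argument is organized differently: it works at the level of function values and Bregman divergences rather than gradients. Specifically, the paper uses the identity $F_{t+1}(\Bx_t^*)=F_{t+1}(\Bx_{t+1}^*)+\nabla F_{t+1}(\Bx_{t+1}^*)^\top(\Bx_t^*-\Bx_{t+1}^*)+B_{F_{t+1}}(\Bx_t^*\|\Bx_{t+1}^*)$, drops the middle term using only the variational inequality at $\Bx_{t+1}^*$, notes $B_{F_{t+1}}=B_R=\tfrac12\|\cdot\|^2$, and then bounds $F_{t+1}(\Bx_t^*)-F_{t+1}(\Bx_{t+1}^*)$ by $\eta\Bg_t^\top(\Bx_t^*-\Bx_{t+1}^*)$ using merely $F_t(\Bx_t^*)\le F_t(\Bx_{t+1}^*)$. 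The crucial difference is that you invoke the variational inequality at \emph{both} minimizers and combine them through the monotone-gradient form of $2$-strong convexity, whereas the paper exploits optimality of $\Bx_t^*$ only in the weaker value form; this is precisely what loses the factor of four. Your route is more direct and self-contained (no Bregman bookkeeping), while the paper's route has the advantage of extending verbatim to general strongly convex regularizers $R$ via $B_R$.
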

	\begin{proof}
		We denote the regularizer in line~\ref{ln:F_t} of  
		\cref{alg:conditional_bandit} by $R(\Bx) \triangleq
		\|\Bx - \Bx_1\|^2$ and define the Bregman divergence with respect the 
		function $F$ by 
		\begin{equation}
		\label{equ:bregman}
		B_F(\Bx \| \By) = F(\Bx) - F(\By) - \nabla F(\By)^\top (\Bx-\By).
		\end{equation}
		Since $\Bx_{t+1}^*$ is a minimizer of $F_{t+1}$ and $F_{t+1}$ is 
		convex, we have	 
		\[
		\begin{split}	 
		F_{t+1}(\Bx_t^*) &= F_{t+1}(\Bx_{t+1}^*) + (\Bx_t^* - \Bx_{t+1}^*)^\top 
		\nabla F_{t+1}(\Bx_{t+1}^*) \\
		& \qquad + B_{F_{t+1}}(\Bx_t^* \| \Bx_{t+1}^*) \\
		&\geq F_{t+1}(\Bx_{t+1}^*) + B_{F_{t+1}}(\Bx_t^* \| \Bx_{t+1}^*) \\
		&= F_{t+1}(\Bx_{t+1}^*) + B_{R}(\Bx_t^* \| \Bx_{t+1}^*)
		\end{split}
		\]
		In the last equation, we use the fact that the Bregman divergence is 
		not 
		influenced by the linear terms in $ F $. Using again the fact that 
		$\Bx_t^*$ is the minimizer of $F_t$,
		we further deduce
		\[
		\begin{split}
		B_{R}(\Bx_t^* \| \Bx_{t+1}^*) & \leq F_{t+1}(\Bx_t^*) - 
		F_{t+1}(\Bx_{t+1}^*) \\
		& =  (F_t(\Bx_t^*) - F_t(\Bx_{t+1}^*)) + \eta \Bg_t^\top (\Bx_t^* - 
		\Bx_{t+1}^*) \\
		& \leq \eta \Bg_t^\top (\Bx_t^* - \Bx_{t+1}^*).
		\end{split}
		\]	 
		On the other hand, applying Taylor's theorem in several variables with 
		the 
		remainder given in Lagrange's form, we know that there exists $ 
		\bm{\xi}_t \in 
		[\Bx^*_t, \Bx^*_{t+1}] \triangleq \{ \lambda \Bx^*_t +(1-\lambda) 
		\Bx^*_{t+1}:\lambda\in [0,1] \} $ such that \[ 
		B_R(\Bx^*_t\| \Bx^*_{t+1} ) = \frac{1}{2} (\Bx^*_t-\Bx^*_{t+1})^\top 
		\BH(\bm{\xi}_t) (\Bx^*_t-\Bx^*_{t+1}),
		\]
		where $ \BH(\bm{\xi}_t) $ denotes the Hessian matrix of $ R $ at point 
		$ 
		\bm{\xi}_t $. Notice that the Hessian matrix of $ R $ is the identity 
		matrix everywhere. Therefore $ B_R(\Bx^*_t\| \Bx^*_{t+1} ) = 
		\frac{1}{2} 
		\| \Bx^*_t-\Bx^*_{t+1}\|^2 
		$.
		By Cauchy-Schwarz inequality, we obtain
		\[
		\begin{split}
		\Bg_t^\top (\Bx_t^*-\Bx_{t+1}^*) & \leq \|\Bg_t\| \cdot \|\Bx_t^* - 
		\Bx_{t+1}^*\|
		\\
		&= \|\Bg_t\| \cdot \sqrt{2B_R(\Bx_t^*\|\Bx_{t+1}^*)} \\
		&\leq \|\Bg_t\| \cdot \sqrt{2\eta \Bg_t^\top(\Bx_t^*-\Bx_{t+1}^*)}
		\end{split},
		\]
		which immediately yields
		$$\Bg_t^\top (\Bx_t^*-\Bx_{t+1}^*) \leq 2\eta \|\Bg_t\|^{2}$$
		
	\end{proof}

\section{Proof of \cref{thm:anytime}}
\label{app:anytime}
\begin{proof}
	The regret of \cref{alg:anytime} by the end of the $ t $-th iteration is at 
	most \[ 
	\begin{split}
	\sum_{m=0}^{ \lceil \log_2(t+1) \rceil -1} \beta (2^m)^{4/5} &= \beta 
	\frac{\left(2^{\lceil \log_2(t+1) \rceil}\right)^{4/5}-1}{2^{4/5}-1} \\
	&\leq 
	\frac{\beta}{1-2^{-4/5}} (t+1)^{4/5}.
	\end{split}
	\]
\end{proof}
\end{appendices}
\end{document}